    \tikzstyle{vertex}=[circle, draw=black, fill=white, inner sep=0ex, minimum size=1ex]
    \tikzstyle{label-vertex}=[minimum size=1ex, inner sep=0ex, circle]
	\tikzstyle{cut-edge}=[dotted]
    \tikzset{every picture/.append style={baseline,scale=1.1}}
	\definecolor{mycolor1}{HTML}{009900}
    \definecolor{g}{HTML}{009900}
    \definecolor{r}{HTML}{990000}
    \definecolor{b}{HTML}{000099}
    \tikzstyle{cut-edge}=[dotted]
    \tikzstyle{vertex}=[circle, draw, fill=white, inner sep=0pt, minimum width=1ex]
	\tikzset{
		latex-arrow/.style={
			decoration={markings,mark=at position 1 with {\arrow[scale=1.5,#1]{latex}}},
    			postaction={decorate},
    			shorten >=0.4pt
		},
  		latex-arrow/.default=black
	}
	\tikzset{
		latex-arrow-red/.style={
			decoration={markings,mark=at position 1 with {\arrow[scale=1.5,#1]{latex}}},
    			postaction={decorate},
    			shorten >=0.4pt
		},
  		latex-arrow-red/.default=mycolor1
	}
	\tikzstyle{dotnode}=[circle,fill=black,inner sep=0ex,minimum size=1.5ex]
	\tikzstyle{mynode}=[circle, draw=black, fill=white, inner sep=0pt, minimum size=2ex]
	\tikzstyle{edge}=[draw=black,latex-arrow]
	\tikzstyle{red-edge}=[draw=mycolor1,latex-arrow-red]
    \newcommand{\bst}[1]{$\mathbf{#1}$}
    \newcommand{\mapr}{\text{AP}}
    \newcommand{\rotatedlabel}[1]{\begin{sideways}#1\end{sideways}}
\newcommand{\uiqp}{\textsc{uiqp}}
\newcommand{\lmp}{\textsc{lmp}}
\newcommand{\nllmp}{\textsc{nl-lmp}}
\newcommand{\kw}[1]{\textit{\textbf{#1}}}
\newcommand{\hw}[1]{{\color{mycolor1}#1}}
\newcommand{\ind}{\hspace{3ex}}
\newcommand{\alga}{KLj/r}
\newcommand{\algb}{KLj$*$r}
\begin{document}


\title{\mbox{Joint Graph Decomposition \& Node Labeling: Problem, Algorithms, Applications}\vspace{-3ex}}

\author{\begin{minipage}{\textwidth}
    \centering
    Evgeny Levinkov$^1$,
    Jonas Uhrig$^{2,3}$,
    Siyu Tang$^1$,
    Mohamed Omran$^1$,
    Eldar Insafutdinov$^1$,\\
    Alexander Kirillov$^4$,
    Carsten Rother$^4$,
    Thomas Brox$^2$,
    Bernt Schiele$^1$ and
    Bjoern Andres$^1$\\[1ex]    
    \small
    $^1$Max Planck Institute for Informatics, Saarland Informatics Campus, Saarbr\"ucken, Germany\\
    $^2$Computer Vision Lab, University of Freiburg, Germany \quad
    $^3$Daimler AG R\&D, Sindelfingen, Germany\\
    $^4$Computer Vision Lab, Technische Universit\"at Dresden, Germany
\end{minipage}}
\maketitle

\begin{abstract}
We state a combinatorial optimization problem whose feasible solutions define both a decomposition and a node labeling of a given graph.
This problem offers a common mathematical abstraction of seemingly unrelated computer vision tasks, including instance-separating semantic segmentation, articulated human body pose estimation and multiple object tracking.
Conceptually, the problem we state generalizes the unconstrained integer quadratic program and the minimum cost lifted multicut problem, both of which are \textsc{np}-hard.
In order to find feasible solutions efficiently, we define two local search algorithms that converge monotonously to a local optimum, offering a feasible solution at any time.
To demonstrate their effectiveness in tackling computer vision tasks, we apply these algorithms to instances of the problem that we construct from published data, using published algorithms.
We report state-of-the-art application-specific accuracy for the three above-mentioned applications.
\end{abstract}


\section{Introduction and Related Work}

Graphs are a ubiquitous structure in the field of computer vision.
In this article, we state an optimization problem whose feasible solutions define both a decomposition and a node labeling of a given graph (Fig.~\ref{figure:teaser}).
We define and implement two local search algorithms for this problem that converge monotonously to a local optimum.
The problem that we state is abstract enough to specialize to seemingly unrelated computer vision tasks.
This abstraction allows us to apply the algorithms we define, without changes, to three distinct computer vision tasks: multiple object tracking, instance-separating semantic segmentation and articulated human body pose estimation.
We report state-of-the-art application-specific accuracy for these three applications.

\emph{Multiple object tracking} \cite{Berclaz:2011:KSP, Brendel:2011:MOT, Choi15, Fagot-Bouquet2016, kim_ICCV2015_MHTR, Milan:2014:CEM, Pirsiavash:2011:GOG, xiang2015learning, Zamir:2012:GMC} can be seen as a task requiring two classes of decisions:
For every point in an image, we need to decide whether this point depicts an object or background.
For every pair of points that depict objects, we need to decide if the object is the same.
Tang et al.~\cite{tang-2015,tang-2016} abstract this task as a graph decomposition and node labeling problem w.r.t.~a finite graph whose nodes are bounding boxes, and w.r.t.~01-labels indicating that a bounding box depicts an object.
We generalize their problem to more labels and more complex objective functions.
By applying this generalization to the data of Tang et al.~\cite{tang-2016}, we obtain more accurate tracks for the multiple object tracking benchmark \cite{MilanL0RS16} than any published work.

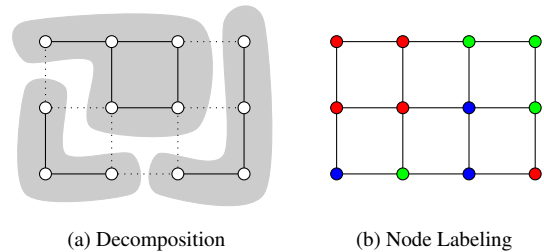
\begin{figure}
\centering
\subfloat[Decomposition]{%
\begin{tikzpicture}[scale=0.8]
    \draw[draw=black!20, fill=black!20] plot[smooth cycle, tension=0.5] coordinates
        {(-0.3, 2.3) (2.3, 2.3) (2.3, 0.7) (0.7, 0.7) (0.7, 1.7) (-0.3, 1.7)};
    \draw[draw=black!20, fill=black!20] plot[smooth cycle, tension=0.5] coordinates
        {(-0.3, -0.3) (-0.3, 1.3) (0.3, 1.3) (0.3, 0.3) (1.3, 0.3) (1.3, -0.3)};
    \draw[draw=black!20, fill=black!20] plot[smooth cycle, tension=0.5] coordinates
        {(1.7, -0.3) (1.7, 0.3) (2.7, 0.3) (2.7, 2.3) (3.3, 2.3) (3.3, -0.3)};
	\draw (0, 0) -- (0, 1);
	\draw (0, 0) -- (1, 0);
	\draw (0, 2) -- (1, 2);
	\draw (1, 1) -- (1, 2);
	\draw (1, 1) -- (2, 1);
	\draw (1, 2) -- (2, 2);
	\draw (2, 1) -- (2, 2);
	\draw (2, 0) -- (3, 0);
	\draw (3, 0) -- (3, 1);
	\draw (3, 1) -- (3, 2);
	\draw[style=cut-edge] (0, 1) -- (0, 2);
	\draw[style=cut-edge] (0, 1) -- (1, 1);
	\draw[style=cut-edge] (1, 0) -- (1, 1);
	\draw[style=cut-edge] (1, 0) -- (2, 0);
	\draw[style=cut-edge] (2, 0) -- (2, 1);
	\draw[style=cut-edge] (2, 1) -- (3, 1);
	\draw[style=cut-edge] (2, 2) -- (3, 2);
	\node[style=vertex] at (0, 0) {};
	\node[style=vertex] at (1, 0) {};
	\node[style=vertex] at (0, 1) {};
	\node[style=vertex] at (0, 2) {};
	\node[style=vertex] at (1, 1) {};
	\node[style=vertex] at (1, 2) {};
	\node[style=vertex] at (2, 1) {};
	\node[style=vertex] at (2, 2) {};
	\node[style=vertex] at (2, 0) {};
	\node[style=vertex] at (3, 0) {};
	\node[style=vertex] at (3, 1) {};
	\node[style=vertex] at (3, 2) {};
\end{tikzpicture}
\label{figure:decomposition}
}\subfloat[Node Labeling]{%
\begin{tikzpicture}[scale=0.8]
    \phantom{
        \draw[draw=green!30, fill=green!30] plot[smooth cycle, tension=0.5] coordinates
            {(-0.3, 2.3) (2.3, 2.3) (2.3, 0.7) (0.7, 0.7) (0.7, 1.7) (-0.3, 1.7)};
        \draw[draw=green!30, fill=green!30] plot[smooth cycle, tension=0.5] coordinates
            {(-0.3, -0.3) (-0.3, 1.3) (0.3, 1.3) (0.3, 0.3) (1.3, 0.3) (1.3, -0.3)};
        \draw[draw=green!30, fill=green!30] plot[smooth cycle, tension=0.5] coordinates
            {(1.7, -0.3) (1.7, 0.3) (2.7, 0.3) (2.7, 2.3) (3.3, 2.3) (3.3, -0.3)};
    }
	\draw (0, 0) -- (0, 1);
	\draw (0, 0) -- (1, 0);
	\draw (0, 2) -- (1, 2);
	\draw (1, 1) -- (1, 2);
	\draw (1, 1) -- (2, 1);
	\draw (1, 2) -- (2, 2);
	\draw (2, 1) -- (2, 2);
	\draw (2, 0) -- (3, 0);
	\draw (3, 0) -- (3, 1);
	\draw (3, 1) -- (3, 2);
	\draw (0, 1) -- (0, 2);
	\draw (0, 1) -- (1, 1);
	\draw (1, 0) -- (1, 1);
	\draw (1, 0) -- (2, 0);
	\draw (2, 0) -- (2, 1);
	\draw (2, 1) -- (3, 1);
	\draw (2, 2) -- (3, 2);
	\node[style=vertex, fill=blue] at (0, 0) {};
	\node[style=vertex, fill=green] at (1, 0) {};
	\node[style=vertex, fill=red] at (0, 1) {};
	\node[style=vertex, fill=red] at (0, 2) {};
	\node[style=vertex, fill=red] at (1, 1) {};
	\node[style=vertex, fill=red] at (1, 2) {};
	\node[style=vertex, fill=blue] at (2, 1) {};
	\node[style=vertex, fill=green] at (2, 2) {};
	\node[style=vertex, fill=blue] at (2, 0) {};
	\node[style=vertex, fill=red] at (3, 0) {};
	\node[style=vertex, fill=green] at (3, 1) {};
	\node[style=vertex, fill=green] at (3, 2) {};
\end{tikzpicture}
\label{figure:node-labeling}
}
\caption{This article studies an optimization problem whose feasible solutions define both a decomposition \protect\subref{figure:decomposition} and a node labeling \protect\subref{figure:node-labeling} of a given graph $G = (V,E)$.
A decomposition of $G$ is a partition $\Pi$ of the node set $V$ such that, for every $V' \in \Pi$, the subgraph of $G$ induced by $V'$ is connected.
A node labeling of $G$ is a map $f: V \to L$ from its node set $V$ to a finite, non-empty set $L$ of labels.\vspace{-9pt}}
\label{figure:teaser}
\end{figure}

\emph{Instance-separating semantic segmentation} \cite{cityscapes16cvpr, Dai2015, Liang2015, ren16arxiv, romera-2016, Ronneberger2015, zhang2016cvpr, zhang2015iccv} can be seen as a task requiring two classes of decisions:
To every point in an image, we need to assign a label that identifies a class of objects (e.g., human, car, bicycle).
For every pair of points of the same class, we need to decide if the object is the same.
Kroeger et al.~\cite{kroeger-2014} state this problem as a multi-terminal cut problem w.r.t.~a (super)pixel adjacency graph of the image.
We generalize their problem to larger feasible sets.
While Kroeger et al.~\cite{kroeger-2014} show qualitative results, we apply our algorithms to instances of the problem from the KITTI \cite{kitti12cvpr} and Cityscapes \cite{cityscapes16cvpr} benchmarks, obtaining more accurate results for Cityscapes than any published work.

\emph{Articulated human body pose estimation} can be seen as a task requiring two classes of decisions:
For every point in an image, we need to decide whether it depicts a part of the human body.
For every pair of points that depict body parts, we need to decide if they belong to the same body.
Pishchulin et al.~\cite{pishchulin-2016} and Insafutdinov et al.~\cite{insafutdinov-2016} abstract this problem as a graph decomposition and node labeling problem w.r.t.~a finite graph whose nodes are putative detections of body parts and w.r.t.~labels that idenfity body part classes (head, wrist, etc.) and background.
We generalize their problem to more complex objective functions.
By reducing the running time for this task compared to their branch-and-cut algorithm (that computes also lower bounds), we can tackle instances of the problem with more nodes.
This allows us to obtain more accurate pose estimates for the MPII Human Pose Dataset~\cite{andriluka14cvpr} than any published work.

Formally, the problem we propose and refer to as the minimum cost node labeling lifted multicut problem, \nllmp{}, generalizes the \textsc{np}-hard unconstrained integer quadratic program, \uiqp{}, that has been studied intensively in the context of graphical models 
\cite{kappes-2015},
and also generalizes the \textsc{np}-hard minimum cost lifted multicut problem, \lmp{} \cite{keuper-2015a}.
Unlike in pure node labeling problems such as the \uiqp{}, neighboring nodes with the same label can be assigned to distinct components, and neighboring nodes with distinct labels can be assigned to the same component.
Unlike in pure decomposition problems such as the \lmp{}, the cost of assigning nodes to the same component or distinct components can depend on node labels.
Also unlike in the \lmp{}, constraining nodes with the same label to the same component constrains the feasible decompositions to be $k$-colorable, with $k \in \mathbb{N}$ the number of labels.
For $k=2$ in particular, this constrained \nllmp{} specializes to the well-known \textsc{max-cut} problem.

In order to find feasible solutions of the \nllmp{} efficiently, we define and implement two local search algorithms that converge monotonously to a local optimum, offering a feasible solution at any time.
These algorithms do not compute lower bounds.
They output feasible solutions without approximation certificates.
Hence, they belong to the class of primal feasible heuristics for the \nllmp{}.
The first algorithm we define and refer to as alternating Kernighan-Lin search with joins and node relabeling, \alga{}, is a generalization of the algorithm KLj of Keuper et al.~\cite{keuper-2015a} and of Iterated Conditional Modes (ICM).
The second algorithm we define and refer to as joint Kernighan-Lin search with joins and node relabeling, \algb, is a generalization of KLj that transforms a decomposition and a node labeling jointly, in a novel manner.
Both algorithms build on seminal work of Kernighan and Lin \cite{kernighan-1970}.

\section{Problem}
\label{section:problem}

In this section, we define the minimum cost node labeling lifted multicut problem, \nllmp{}.
Sections~\ref{section:parameters}--\ref{section:cost-function} offer an intuition for its parameters, feasible solutions and cost function.
Section~\ref{section:definition} offers a concise and rigorous definition.
Section~\ref{section:special-cases} discusses special cases.

\subsection{Parameters}
\label{section:parameters}

Any instance of the \nllmp{} is defined with respect to the following parameters:
\begin{itemize}\parskip0ex
\item A connected graph $G = (V, E)$ whose decompositions we care about, e.g., the pixel grid graph of an image.
\item A graph $G' = (V, E')$ with $E \subseteq E'$.
This graph can contain as edges pairs of nodes that are not neighbors in $G$.
It defines the structure of the cost function.
\item A digraph $H = (V, A)$ that fixes an arbitrary orientation of the edges $E'$.
That is, for every edge $\{v,w\}$ of $G'$, the graph $H$ contains either the edge $(v,w)$ or the edge $(w,v)$.
Formally, $H$ is such that for all $v,w \in V$:
\begin{align}
& \{v,w\} \in E' \Leftrightarrow (v,w) \in A \vee (w,v) \in A \\
& (v,w) \notin A \vee (w,v) \notin A
\end{align}
\item A finite, non-empty set $L$ called the set of \emph{(node) labels}

\item The following functions whose values are called \emph{costs}:

\begin{itemize}
\item $c: V \times L \to \mathbb{R}$.
For any node $v \in V$ and any label $l \in L$, the cost $c_{vl}$ is payed iff $v$ is labeled $l$.
\item $c^\sim: A \times L^2 \to \mathbb{R}$.
For any edge $vw \in A$ and any labels $ll' \in L^2$, the cost $c^\sim_{vw,ll'}$ is payed iff $v$ is labeled $l$ and $w$ is labeled $l'$ and $v$ and $w$ are in the same component.
\item $c^{\not\sim}: A \times L^2 \to \mathbb{R}$.
For any edge $vw \in A$ and any labels $ll' \in L^2$, the cost $c^{\not\sim}_{vw,ll'}$ is payed iff $v$ is labeled $l$ and $w$ is labeled $l'$ and $v$ and $w$ are in distinct components.
\end{itemize}
\end{itemize}

\subsection{Feasible Set}
\label{section:feasible-set}

\begin{wrapfigure}{R}{0.3\linewidth}
\hspace*{-3ex}
\begin{tikzpicture}[xscale=0.6]
\node[style=vertex, label=above:$v$] at (0, 0) (v) {};
\node[style=vertex, label=above:$w$] at (2, 0) (w) {};
\draw (v) -- (w);
\node at (1, 0.2) {$e$};
\node[style=label-vertex, fill=red, label=left:{$x_{v1}$}] at (0, -0.5) {};
\node[style=label-vertex, fill=green, label=left:{$x_{v2}$}] at (0, -0.8) {};
\node[style=label-vertex, fill=blue, label=left:{$x_{v3}$}] at (0, -1.1) {};
\node[style=label-vertex, fill=red, label=right:{$x_{w1}$}] at (2, -0.5) {};
\node[style=label-vertex, fill=green, label=right:{$x_{w2}$}] at (2, -0.8) {};
\node[style=label-vertex, fill=blue, label=right:{$x_{w3}$}] at (2, -1.1) {};
\node[style=label-vertex, fill=white, draw=black] at (1, -0.65) {};
\node[style=label-vertex, fill=black, label=below:{$y_{vw}$}] at (1, -0.95) {};
\end{tikzpicture}
\end{wrapfigure}

Every feasible solution of the \nllmp{} is a pair $(x,y)$ of 01-vectors $x \in \{0,1\}^{V \times L}$ and $y \in \{0,1\}^{E'}$.
More specifically, $x$ is constrained such that, for every node $v \in V$, there is precisely one label $l \in L$ with $x_{vl} = 1$.
$y$ is constrained so as to well-define a decomposition of $G$ by the set $\{e \in E\,|\,y_e = 1\}$ of those edges that straddle distinct components.
Formally, $(x,y) \in X_{VL} \times Y_{GG'}$ with $X_{VL}$ and $Y_{GG'}$ defined below.

\begin{itemize}\parskip0ex
\item $X_{VL} \subseteq \{0,1\}^{V \times L}$, the set of all characteristic
functions of maps from $V$ to $L$, i.e., the set of all
$x \in \{0,1\}^{V \times L}$ such that
\begin{align}
    \forall v \in V: \quad
        \sum_{l \in L} x_{vl} = 1
\enspace .
\label{eq:map-constraint}
\end{align}
For any $x \in X$, any $v \in V$ and any $l \in L$ with $x_{vl} = 1$, we say that node $v$ is \emph{labeled} $l$ by $x$.
\item $Y_{GG'} \subseteq \{0,1\}^{E'}$, the set of all characteristic functions of multicuts of $G'$ lifted from $G$ \cite{andres-2015-arxiv}.
For any $y \in Y_{GG'}$ and any $e = \{v,w\} \in E'$, $y_e = 1$ indicates that $v$ and $w$ are in distinct components of the decomposition of $G$ defined by the multicut $\{e' \in E\,|\,y_{e'} = 1\}$ of $G$.
Formally,  $Y_{GG'}$ is the set of all $y \in \{0,1\}^{E'}$ that satisfy the following system of linear inequalities:
\begin{align}
& \forall C \in \textnormal{cycles}(G) \,
	\forall e \in C: \, y_e \leq \sum_{e' \in C \setminus \{e\}} y_{e'} \label{eq:cycles}
\\
& \forall \{v,w\} \in E' \setminus E \,
	\forall P \in vw\textnormal{-paths}(G): \nonumber \\
        & \qquad y_{\{v,w\}} \leq \sum_{e \in P} y_e
\\
& \forall \{v,w\} \in E' \setminus E \,
    \forall C \in vw\textnormal{-cuts}(G): \nonumber \\
        & \qquad 1 - y_{\{v,w\}} \leq \sum_{e \in C} (1 - y_e)
\enspace .
\end{align}
\end{itemize}

\subsection{Cost Function}
\label{section:cost-function}
For every $x \in \{0,1\}^{V \times L}$ and every $y \in \{0,1\}^{A \times L^2}$, a cost $\varphi(x,y) \in \mathbb{R}$ is defined by the form
\begin{align}
\varphi(x,y) = & \sum_{v \in V} \sum_{l \in L} c_{vl} \, x_{vl} \nonumber \\
& + \sum_{vw \in A} \sum_{ll' \in L^2} c^\sim_{vw,ll'} \, x_{vl} \, x_{wl'} \, (1 - y_{\{v,w\}}) \nonumber \\
& + \sum_{vw \in A} \sum_{ll' \in L^2} c^{\not\sim}_{vw,ll'} \, x_{vl} \, x_{wl'} \, y_{\{v,w\}}
\enspace .
\label{eq:objective}
\end{align}

\subsection{Definition}
\label{section:definition}

We define the \nllmp{} rigorously and concisely in the form of a linearly constrained binary cubic program.

\begin{definition}\label{definition:nllmp}
For any connected graph $G = (V, E)$,
any graph $G' = (V, E')$ with $E \subseteq E'$,
any orientation $H = (V, A)$ of $G'$,
any finite, non-empty set $L$,
any function $c: V \times L \to \mathbb{R}$ and
any functions $c^\sim, c^{\not\sim}: A \times L^2 \to \mathbb{R}$,
the instance of the \emph{minimum cost node-labeling lifted multicut problem} (\textsc{nl-lmp})
with respect to $(G, G', H, L, c, c^\sim, c^{\not\sim})$ has the form
\begin{align}
\min_{(x,y) \in X_{VL} \times Y_{GG'}} \, \varphi(x,y)
\enspace .
\label{eq:nllmp}
\end{align}
\end{definition}

\subsection{Special Cases}
\label{section:special-cases}

Below, we show that the \nllmp{} generalizes the \uiqp{}.
This connects the \nllmp{} to work on graphical models with second-order functions and finitely many labels.
In addition, we show that \nllmp{} generalizes the \lmp{}, connecting the \nllmp{} to recent work on lifted multicuts.
Finally, we show that the \nllmp{} is general enough to express subgraph selection, connectedness and disconnectedness constraints.

\subsubsection{Unconstrained Integer Quadratic Program}

\begin{definition}
	For any graph $G' = (V, E')$,
	any orientation $H = (V, A)$ of $G'$,
	any finite, non-empty set $L$,
	any $c: V \times L \to \mathbb{R}$ and
	any $c': A \times L^2 \to \mathbb{R}$,
	the instance of the \uiqp{} with respect to $(G', H, L, c, c')$ has the form
	\begin{align}
	\min_{x \in X_{VL}} \,
	\sum_{v \in V} \sum_{l \in L} c_{vl} \, x_{vl}
	+ \sum_{vw \in A} \sum_{ll' \in L^2} c'_{vw,ll'} \, x_{vl} \, x_{wl'}
	\enspace .
	\label{eq:uiqp}
	\end{align}
\end{definition}

\begin{lemma}
	For any graph $G' = (V, E')$, any instance $(G', H, L, c, c')$ of the \uiqp{} and any $x \in X_{VL}$, $x$ is a solution of this instance of the \uiqp{} iff $(x, 1_{E'})$ is a solution of the instance $(G', G', H, L, c, c', c')$ of the \nllmp{}.
\end{lemma}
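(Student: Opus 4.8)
The plan is to show that the specific choice of costs $c^\sim = c^{\not\sim} = c'$ renders the objective $\varphi$ completely insensitive to the decomposition variable $y$, so that the \nllmp{} instance collapses onto the given \uiqp{} instance.

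First I would record that, in the \nllmp{} instance $(G', G', H, L, c, c', c')$, the first and second graphs coincide, so $E = E'$ and there are no lifted edges. Hence the path and cut inequalities indexed by $E' \setminus E$ are vacuous, and $Y_{G'G'}$ is exactly the set of characteristic vectors of multicuts of $G'$. In particular I would verify that $1_{E'} \in Y_{G'G'}$, i.e.\ that $(x, 1_{E'})$ is a feasible point for every $x \in X_{VL}$: the all-ones vector encodes the decomposition of $G'$ into singleton components, which is trivially a valid decomposition, and it satisfies every cycle inequality \eqref{eq:cycles} because such an inequality reads $1 = y_e \leq |C|-1 = \sum_{e' \in C \setminus \{e\}} y_{e'}$, with $|C| \geq 2$.

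The key step is to evaluate the two edge terms of \eqref{eq:objective} under $c^\sim = c^{\not\sim} = c'$. For every $vw \in A$, every $ll' \in L^2$ and every $y \in \{0,1\}^{E'}$,
\[
c^\sim_{vw,ll'}\,(1 - y_{\{v,w\}}) + c^{\not\sim}_{vw,ll'}\, y_{\{v,w\}} = c'_{vw,ll'},
\]
independently of the value $y_{\{v,w\}}$. Summing over edges and label pairs and adding the unary term then shows that $\varphi(x,y)$ does not depend on $y$ at all and equals the \uiqp{} objective \eqref{eq:uiqp} evaluated at $x$, which I denote $\psi(x)$. In other words, $\varphi(x,y) = \psi(x)$ for every $(x,y) \in X_{VL} \times Y_{G'G'}$.

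With this identity the equivalence is immediate, and I would spell out both directions over the product feasible set. If $x^*$ minimizes $\psi$ over $X_{VL}$, then for every feasible $(x,y)$ we have $\varphi(x,y) = \psi(x) \geq \psi(x^*) = \varphi(x^*, 1_{E'})$, so $(x^*, 1_{E'})$ minimizes $\varphi$; conversely, if $(x^*, 1_{E'})$ minimizes $\varphi$, then restricting to the slice $y = 1_{E'}$ gives $\psi(x^*) = \varphi(x^*, 1_{E'}) \leq \varphi(x, 1_{E'}) = \psi(x)$ for every $x$, so $x^*$ minimizes $\psi$. I do not expect a genuine obstacle here: the only point requiring care is confirming that $1_{E'}$ is feasible, so that the asserted correspondence lands inside the \nllmp{} feasible set, while the heart of the argument is the one-line cancellation showing that equal same-component and cut costs make the objective independent of the decomposition.
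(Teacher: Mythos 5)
Your proof is correct and follows essentially the same route as the paper: check that $(x,1_{E'})$ is feasible, then observe that with $c^\sim = c^{\not\sim} = c'$ the two edge terms of \eqref{eq:objective} sum to $c'_{vw,ll'}\,x_{vl}\,x_{wl'}$ independently of $y$, so the cost collapses to the \uiqp{} objective \eqref{eq:uiqp}. Your write-up is in fact slightly more explicit than the paper's one-sentence ``the form specializes'' claim, since you record that $\varphi(x,y)=\psi(x)$ on the \emph{entire} feasible set, which is exactly what makes both directions of the equivalence go through. The one point you omit, and which the paper's proof opens with, is that the \nllmp{} (Definition~\ref{definition:nllmp}) is only defined for a \emph{connected} graph $G$, whereas the lemma quantifies over arbitrary $G'$; one must first reduce to the connected case by adding the missing edges with cost $c'_{vw,ll'}=0$, which changes neither objective nor minimizers. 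That is a one-line fix rather than a structural flaw, but without it the \nllmp{} instance $(G',G',H,L,c,c',c')$ is not well-defined for disconnected $G'$.
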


\begin{proof}
	Without loss of generality, we can assume that $G'$ is connected.
	(Otherwise, we add edges between nodes $v,w \in V$ as necessary and set $c'_{vw,ll'} = 0$ for any $l,l' \in L$.)

	For any $x \in X_{GL}$, the pair $(x, 1_{E'})$ is a feasible solution of the instance of the \nllmp{} because the map $1_{E'}: E' \to \{0,1\}: e \mapsto 1$ is such that $1_{E'} \in Y_{G'G'}$.

	Moreover, $(x, 1_{E'})$ is a solution of the instance of the \nllmp{} iff $x$ is a solution of the instance of the \uiqp{} because, for $c^{\not\sim} = c^{\sim}$, the form \eqref{eq:objective} of the cost function of the \nllmp{} specializes to the form \eqref{eq:uiqp} of the cost function of the \uiqp{}.
\end{proof}

\subsubsection{Minimum Cost Lifted Multicut Problem}

\begin{definition}\cite{andres-2015-arxiv}
For any connected graph $G = (V,E)$,
any graph $G' = (V, E')$ with $E \subseteq E'$ and
any $c': E' \to \mathbb{R}$,
the instance of the minimum cost lifted multicut problem (\lmp{}) with respect to $(G, G', c')$ has the form
\begin{align}
\min_{y \in Y_{GG'}} \, \sum_{e \in E'} c'_e y_e \enspace .
\label{eq:lmp}
\end{align}
\end{definition}

\begin{lemma}
Let $(G, G', c')$ be any instance of the \lmp{}.
Let $(G, G', H, L, c, c^\sim, c^{\not\sim})$ be the instance of the \nllmp{} with the same graphs and such that
\begin{align}
L = \{1\} \quad c = 0 \quad c^\sim & = 0 \label{eq:proof-2}\\
\forall (v,w) \in A: \quad c^{\not\sim}_{vw,11} & = c'_{\{v,w\}} \enspace . \label{eq:proof-3}
\end{align}
Then, for any $y \in \{0,1\}^{E'}$, $y$ is a solution of the instance of the \lmp{} iff $(1_{V \times L},y)$ is a solution of the instance of the \nllmp{}.
\end{lemma}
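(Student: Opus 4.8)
The statement asserts an equivalence between solutions of an \lmp{} instance and solutions of a specially-constructed \nllmp{} instance with a single label. The plan is to exploit the fact that with $|L| = 1$ the label choice is forced, so the two problems share the same feasible decompositions and, under the cost assignment \eqref{eq:proof-2}--\eqref{eq:proof-3}, the same objective value. Concretely, I would first argue that the feasible sets are in bijection, then that the objective functions agree on corresponding points, and conclude that minimizers correspond.

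First I would settle the feasibility/labeling part. Since $L = \{1\}$, the constraint \eqref{eq:map-constraint} forces $x_{v1} = 1$ for every $v \in V$, so $X_{VL} = \{1_{V \times L}\}$ is a singleton. Hence every feasible solution of the \nllmp{} instance has the form $(1_{V \times L}, y)$, and the constraint on $y$ is exactly membership in $Y_{GG'}$, which is identical to the feasible set of the \lmp{} instance (the two instances share $G$ and $G'$). Thus $y \mapsto (1_{V \times L}, y)$ is a bijection between the feasible set of the \lmp{} and that of the \nllmp{}, and in particular $(1_{V \times L}, y)$ is feasible for the \nllmp{} iff $y \in Y_{GG'}$.

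Next I would show the objectives coincide. I substitute $x = 1_{V \times L}$ into the cubic form \eqref{eq:objective}. Because every $x_{vl}$ with $l = 1$ equals $1$ and there are no other labels, the products $x_{vl} x_{wl'}$ reduce to $1$ for the only pair $(l,l') = (1,1)$. The node term $\sum_v c_{v1} x_{v1}$ vanishes since $c = 0$ by \eqref{eq:proof-2}; the ``same component'' term vanishes since $c^\sim = 0$; and the ``distinct component'' term collapses to $\sum_{vw \in A} c^{\not\sim}_{vw,11}\, y_{\{v,w\}} = \sum_{vw \in A} c'_{\{v,w\}}\, y_{\{v,w\}}$ by \eqref{eq:proof-3}. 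Since $A$ fixes exactly one orientation of each edge of $E'$, summing over $vw \in A$ is the same as summing over $e \in E'$, so this equals the \lmp{} objective $\sum_{e \in E'} c'_e y_e$ of \eqref{eq:lmp}. Therefore $\varphi(1_{V \times L}, y)$ equals the \lmp{} cost of $y$ for every feasible $y$.

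Finally, combining the two observations, the bijection $y \mapsto (1_{V \times L}, y)$ between feasible sets is cost-preserving, so it carries minimizers to minimizers in both directions; i.e., $y$ solves the \lmp{} instance iff $(1_{V \times L}, y)$ solves the \nllmp{} instance, as claimed. I do not expect a genuine obstacle here: the only point requiring minor care is the reindexing from the oriented sum over $A$ to the unoriented sum over $E'$, which is legitimate precisely because of the orientation property \eqref{eq:proof-3} holds for all $(v,w) \in A$ and $H$ orients each edge exactly once.
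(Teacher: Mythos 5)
Your proposal is correct and follows essentially the same route as the paper's proof: establish the feasibility correspondence (trivial since $L=\{1\}$ forces $x = 1_{V\times L}$), then show the cost function \eqref{eq:objective} collapses under \eqref{eq:proof-2}--\eqref{eq:proof-3} to $\sum_{e \in E'} c'_e y_e$, matching \eqref{eq:lmp}. Your version merely spells out more explicitly the singleton structure of $X_{VL}$ and the reindexing from the oriented sum over $A$ to the unoriented sum over $E'$, both of which the paper treats as immediate.
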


\begin{proof}
Trivially, $y$ is a feasible solution of the instance of the \lmp{} iff $(1_{V \times L}, y)$ is a feasible solution of the instance of the \nllmp{}.
More specifically, $y$ is a solution of the instance of the \lmp{} iff $(1_{V \times L}, y)$ is a solution of the instance of the \nllmp{} because, for any $x \in X_{VL}$, the cost function \eqref{eq:objective} of the \nllmp{} assumes the special form below which is identical with the form in \eqref{eq:lmp}.
\begin{align}
\varphi(x,y)
\overset{\eqref{eq:map-constraint},\eqref{eq:proof-2}}{=} \sum_{vw \in A} c^{\not\sim}_{vw,11} y_{\{v,w\}}
\overset{\eqref{eq:proof-3}}{=} \sum_{e \in E'} c'_e y_e
\enspace .
\end{align}
\end{proof}

\subsubsection{Subgraph Selection}
\label{sec:suppression}

Applications such as~\cite{insafutdinov-2016,pishchulin-2016,tang-2015,tang-2016} require us to not only decompose a graph and label its nodes but to also select a subgraph.
The \nllmp{} is general enough to model subgraph selection.
To achieve this, one proceeds in two steps:
Firstly, one introduces a special label $\epsilon \in L$ to indicate that a node is not an element of the subgraph.
We call these nodes \emph{inactive}.
All other nodes are called \emph{active}.
Secondly, one chooses a large enough $c^* \in \mathbb{N}$, a $c^\dagger \in \mathbb{N}_0$ and $c^\sim, c^{\not\sim}$ such that
\begin{align}
\forall vw \in A \, \forall l \in L \setminus \{\epsilon\}: \quad
    & c^\sim_{vw, l \epsilon} = c^\sim_{vw, \epsilon l} = c^* \label{eq:subgraph-1}\\
    & c^{\not\sim}_{vw, l \epsilon} = c^{\not\sim}_{vw, \epsilon l} = 0 \label{eq:subgraph-2}\\
\forall vw \in A: \quad
    & c^\sim_{vw, \epsilon \epsilon} = c^\dagger \label{eq:subgraph-3}
\enspace .
\end{align}

By \eqref{eq:subgraph-1}, inactive nodes are not joined with active nodes in the same component.
By \eqref{eq:subgraph-2}, cutting an inactive node from an active node has zero cost.
By \eqref{eq:subgraph-3}, joining inactive nodes has cost $c^\dagger$, possibly zero.
Choosing $c^\dagger$ large enough implements an additional constraint proposed in \cite{tang-2015} that inactive nodes are necessarily isolated.
It is by this constraint and by a two-elementary label set that \cite{tang-2015} is a specialization of the \nllmp{}.

\subsubsection{(Dis-)Connectedness Constraints}
\label{section:connectedness}

Some applications require us to constrain certain nodes to be in distinct components.
One example is instance-separating semantic segmentation where nodes with distinct labels necessarily belong to distinct segments \cite{kroeger-2014}.
Other applications require us to constrain certain nodes to be in the same component.
One example is articulated human body pose estimation for a single human in the optimization framework of \cite{pishchulin-2016} where every pair of active nodes necessarily belongs to the same human.
Another example is connected foreground segmentation \cite{nowozin-2010,rempfler-2016,stuehmer-2015,vicente-2008} in which every pair of distinct foreground pixels necessarily belongs to the same segment.

The \nllmp{} is general enough to model a combination of connectedness constraints and disconnectedness constraints by sufficiently large costs:
In order to constrain distinct nodes $v,w \in V$ with labels $l,l' \in L$ to be in \emph{the same component}, one introduces an edge $(v,w) \in A$, a large enough $c^* \in \mathbb{N}$ and costs $c^\nsim$ such that $c^\nsim_{vw, ll'} = c^\nsim_{vw, l'l} = c^*$.
In order to constrain distinct nodes $v,w \in V$ with labels $l,l' \in L$ to be in \emph{distinct components}, one introduces an edge $(v,w) \in A$, a large enough $c^* \in \mathbb{N}$ and costs $c^{\sim}$ such that $c^{\sim}_{vw, ll'} = c^{\sim}_{vw, l'l} = c^*$.

\section{Algorithms}
\label{section:algorithm}

In this section, we define two local search algorithms that compute feasible solutions of the \nllmp{} efficiently.
Both algorithms attempt to improve a current feasible solution recursively by \emph{transformations}.
One class of transformations alters the node labeling of the graph by replacing a single node label.
A second class of transformations alters the decomposition of the graph by moving a single node from one component to another.
A third class of transformations alters the decomposition of the graph by joining two components.

As proposed by Kernighan and Lin~\cite{kernighan-1970} and generalized to the \lmp{} by Keuper et al.~\cite{keuper-2015a}, a local search is carried out not over the set of individual transformations of the current feasible solution but over a set of sequences of transformations.
Complementary to this idea, we define and implement two schemes of combining transformations of the decomposition of the graph with transformations of the node labeling of the graph.
This leads us to define two local search algorithms for the \nllmp{}.

\subsection{Encoding Feasible Solutions}
To encode feasible solutions $(x,y) \in X_{VL} \times Y_{GG'}$ of the \nllmp{}, we consider two maps:
A \emph{node labeling} $\lambda: V \to L$ that defines the $x^\lambda \in X_{VL}$ such that
\begin{align}
\forall v \in V \,
\forall l \in L: \quad
    x^\lambda_{vl} = 1 \,\Leftrightarrow\, \lambda(v) = l
\enspace ,
\end{align}
and a so-called \emph{component labeling} $\mu: V \to \mathbb{N}$ that defines the $y^\mu \in \{0,1\}^{E'}$ such that
\begin{align}
\forall \{v,w\} \in E': \quad
    y^\mu_{\{v,w\}} = 0 \,\Leftrightarrow\, \mu(v) = \mu(w)
\enspace .
\end{align}

\subsection{Transforming Feasible Solutions}
To improve feasible solutions of the \nllmp{} recursively, we consider three transformations of the encodings $\lambda$ and $\mu$:

For any node $v \in V$ and any label $l \in L$, the transformation $T_{vl}: L^V \to L^V: \lambda \mapsto \lambda'$ changes the label of the node $v$ to $l$, i.e.
\begin{align}
\forall w \in V: \quad
\lambda'(w) := \begin{cases}
    l   & \textnormal{if}\ w = v \\
    \lambda(w) & \textnormal{otherwise}
\end{cases}
\enspace .
\label{eq:transform-relabel}
\end{align}

For any node $v \in V$ and any component index $m \in \mathbb{N}$, the transformation $T'_{vm}: \mathbb{N}^V \to \mathbb{N}^V: \mu \mapsto \mu'$ changes the component index of the node $v$ to $m$, i.e.
\begin{align}
\forall w \in V: \quad
\mu'(w) := \begin{cases}
    m   & \textnormal{if}\ w = v \\
    \mu(w) & \textnormal{otherwise}
\end{cases}
\enspace .
\label{eq:transform-cut}
\end{align}

For any component indices $m,m' \in \mathbb{N}$, the transformation $T'_{mm'}: \mathbb{N}^V \to \mathbb{N}^V: \mu \mapsto \mu'$ puts all nodes currently in the component indexed by $m$ into the component indexed by $m'$, i.e.
\begin{align}
	\forall w \in V: \quad
	\mu'(w) := \begin{cases}
		m'   & \textnormal{if}\ \mu(w) = m\\
		\mu(w) & \textnormal{otherwise}
	\end{cases}
	\enspace .
\label{eq:transform-join}
\end{align}

Not every component labeling $\mu$ is such that $y^{\mu} \in Y_{GG'}$.
In fact, $y^{\mu}$ is feasible if and only if, for every $m \in \mu(V)$, the node set $\mu^{-1}(m)$ is connected in $G$.
For efficiency, we allow for transformations \eqref{eq:transform-cut} whose output $\mu'$ violates this condition, as proposed in \cite{keuper-2015a}.
This happens when an \emph{articulation node} of a component is moved to a different component.
In order to \emph{repair} any $\mu'$ for which $y^{\mu}$ is infeasible, we consider a map $R: \mathbb{N}^V \to \mathbb{N}^V: \mu' \mapsto \mu$ such that, for any $\mu': V \to \mathbb{N}$ and any distinct $v,w \in V$, we have $\mu(v) = \mu(w)$ if and only if the exists a $vw$-path in $G$ along which all nodes have the label $\mu'(v)$.
We implement $R$ as connected component labeling by breadth-first-search.

\subsection{Searching Feasible Solutions}

We now define two local search algorithms that attempt to improve an initial feasible solution recursively, by applying the transformation defined above.
Initial feasible solutions are given, for instance, by the finest decomposition of the graph $G$ that puts every node in a distinct component, or by the coarsest decomposition of the graph $G$ that puts every node in the same component, each together with any node labeling.
We find an initial feasible solution for our local search algorithm by first fixing an optimal label for every node independently and by then solving the resulting \lmp{}, i.e., \eqref{eq:nllmp} for the fixed labels $x \in X_{VL}$, by means of greedy agglomerative edge contraction \cite{keuper-2015a}.

\textbf{\alga{} Algorithm.}
The first local search algorithm we define, alternating Kernighan-Lin search with joins and node relabeling, \alga{}, alternates between transformations of the node labeling and transformations of the decomposition.
For a fixed decomposition, the labeling is transformed by Func.~\ref{function:update-labeling} which greedily updates labels of nodes independently.
For a fixed labeling, the decomposition is transformed by Func.~\ref{function:update-multicut}, \emph{without those parts of the function that are written in green}, i.e., precisely the algorithm KLj of \cite{keuper-2015a}.
(All symbols that appear in the pseudo-code are defined above, except the iteration counter $t$, cost differences $\delta, \Delta$, and 01-vectors $\alpha$ used for bookkeeping, to avoid redundant operations.)

\textbf{\algb{} Algorithm.}
The second local search algorithm we define, joint Kernighan-Lin search with joins and node relabeling, \algb{}, transforms the decomposition and the node labeling jointly, by combining the transformations \eqref{eq:transform-relabel}--\eqref{eq:transform-join} in a novel manner.
It is given by Func.~\ref{function:update-multicut}, \emph{with those parts of the function that are written in green}.

Like the alternating algorithm \alga{}, the joint algorithm \algb{} updates the labeling for a fixed decomposition (calls of Func.~\ref{function:update-labeling} from Func.~\ref{function:update-multicut}).
Unlike the alternating algorithm \alga{}, the joint algorithm \algb{} updates the decomposition and the labeling also jointly.
This happens in Func.~\ref{function:update-two-cut} that is called from \algb{}, \emph{with the part  that is written in green}.

\begin{function}[t]
\caption{$(\Delta, \lambda') =\,$update-labeling$(\mu, \lambda)$}
\label{function:update-labeling}
\vspace{-3ex}
\flushleft
$\lambda_0 := \lambda \quad$ 
$\Delta := 0 \quad$
$t := 0$ \\
\kw{repeat} \\
\ind \kw{choose} $(\hat v, \hat l) \in \!\! \underset{(v, l) \in V \times L}{\mathrm{argmin}} \varphi(x^{T_{vl}(\lambda_t)}, y^{\mu_t}) - \varphi(x^{\lambda_t}, y^{\mu_t})$ \\
\ind $\delta := \varphi(x^{T_{\hat v \hat l}(\lambda_t)}, y^{\mu_t}) - \varphi(x^{\lambda_t}, y^{\mu_t})$ \\
\ind \kw{if} $\delta < 0$ \\
\ind \ind $\lambda_{t+1} := T_{\hat v \hat l}(\lambda_t)$ \\
\ind \ind $\Delta := \Delta + \delta$ \\
\ind \ind $t := t + 1$ \\
\ind \kw{else} \\
\ind \ind \kw{return} $(\Delta, \lambda_t)$
\end{function}

Func.~\ref{function:update-two-cut} looks at two components $V := \mu^{-1}(m)$ and $W := \mu^{-1}('m)$ of the current decomposition.
It attempts to improve the decomposition as well as the labeling by moving a node from $V$ to $W$ or from $W$ to $V$ \emph{and by simultaneously changing its label}.
As proposed by Kernighan and Lin \cite{kernighan-1970}, Func.~\ref{function:update-two-cut} does not make such transformations greedily but first constructs a sequence of such transformations greedily and then executes the first $k$ with $k$ chosen so as to decrease the objective value maximally.
\alga{} constructs a sequence of transformations analogously, but the node labeling remains fixed throughout every transformation of the decomposition.
Thus, \algb{} is a local search algorithm whose local neighborhood is strictly larger than that of \alga{}.

\begin{function}[b]
\caption{$(\Delta', \mu' \hw{, \lambda'}) =\,$update-lifted-multicut$(\mu\hw{, \lambda})$}
\label{function:update-multicut}
\vspace{-3ex}
\flushleft
$\mu_0 := \mu \qquad t : = 0$ \\
\hw{$(\delta, \lambda_0) := \textnormal{update-labeling}(\mu_0, \lambda)$} \\
\kw{let} $\alpha_0: \mathbb{N} \to \{0,1\}$ such that $\alpha_0(\mathbb{N}) = 1$ \\
\kw{repeat} \\
\ind $\Delta := 0 \quad$ 
\ind $\mu_{t+1} := \mu_t \quad$ 
\ind \hw{$\lambda_{t+1} := \lambda_t$} \\
\ind \kw{let} $\alpha_{t+1}: \mathbb{N} \to \{0,1\}$ such that $\alpha_{t+1}(\mathbb{N}) = 0$ \\
\ind \kw{for each} $\{m, m'\} \in \tbinom{\mu(V)}{2}$ \\
\ind \ind \kw{if} $\alpha_t(m) = 0 \wedge \alpha_t(m') = 0$ \\
\ind \ind \ind \kw{continue} \\
\ind \ind $(\delta, \mu_{t+1}\hw{, \lambda_{t+1}}) := \,$update-2-cut$(\mu_{t+1}\hw{, \lambda_{t+1}}, m, m')$ \\
\ind \ind \kw{if} $\delta < 0$ \\
\ind \ind \ind $\alpha_{t+1}(m) := 1 \quad \alpha_{t+1}(m') := 1 \quad \Delta := \Delta + \delta$ \\
\ind \kw{for each} $m \in \mu(V)$\\
\ind \ind \kw{if} $\alpha_t(m) = 0$ \\
\ind \ind \ind \kw{continue} \\
\ind \ind $m' := 1 + \max \mu(V)$ \hfill (new component)\\
\ind \ind $(\delta, \mu_{t+1}\hw{, \lambda_{t+1}}) := \,$update-2-cut$(\mu_{t+1}\hw{, \lambda_{t+1}}, m, m')$ \\
\ind \ind \kw{if} $\delta < 0$ \\
\ind \ind \ind $\alpha_{t+1}(m) := 1 \quad \alpha_{t+1}(m') := 1 \quad \Delta := \Delta + \delta$ \\
\ind \hw{$(\delta, \lambda_{t+1}) := \textnormal{update-labeling}(\mu_{t+1}, \lambda_{t+1})$} \\
\ind \hw{$\Delta := \Delta + \delta$} \\
\ind \kw{if} $y^{\mu_{t+1}} \notin Y_{GG'}$ \\
\ind \ind $\mu_{t+1} := R(\mu_{t+1})$ \hfill (repair heuristic)\\
\ind \ind $\Delta := \varphi(x^{\hw{\lambda_{t+1}}}, y^{\mu_{t+1}}) - \varphi(x^{\hw{\lambda_0}}, y^{\mu_0})$ \\
\ind $t := t + 1$ \\
\kw{while} $\Delta < 0$
\end{function}

Our \textsc{c++} implementation computes cost differences incrementally and solves the optimization problem over transformations by means of a priority queue, as described in detail in Appendix~\ref{app:alg_details}.
The time and space complexities are identical to those of KLj and are established in \cite{keuper-2015a}, as transformations that take linear time in the number of labels take constant time in the size of the graph.

\begin{function}
\caption{$(\Delta', \mu' \hw{, \lambda'}) =\,$update-2-cut$(\mu\hw{, \lambda}, m, m')$}
\label{function:update-two-cut}
\vspace{-3ex}
\flushleft
$\mu_0 := \mu \qquad$ \hw{$\lambda_0 := \lambda$} $\qquad t := 0$\\
\kw{if} $\mu^{-1}(m') = \emptyset$ \\
\ind $V_0 := \mu^{-1}(m)$ \\
\kw{else} \\
\ind $V_0 := \{v \in \mu^{-1}(m) \,|\, \exists w \in \mu^{-1}(m'): \, \{v,w\} \in E \}$ \\
\kw{if} $\mu^{-1}(m) = \emptyset$ \\
\ind $W_0 := \mu^{-1}(m')$ \\
\kw{else} \\
\ind $W_0 := \{w \in \mu^{-1}(m') \,|\, \exists v \in \mu^{-1}(m): \, \{v,w\} \in E \}$ \\
\kw{let} $\alpha: \mathbb{N} \to \{0,1\}$ such that $\alpha(\mathbb{N}) = 1$ \\
\kw{while} $V_t \cup W_t \neq \emptyset$ \\
\ind $\delta := \delta' := \infty$ \\
\ind \kw{if} $V_t \neq \emptyset$ \\
\ind \ind \kw{choose} $\hw{(} \hat v \hw{, \hat l)} \in \underset{\hw{(} v \hw{, l)} \in V_t \hw{\times L}}{\mathrm{argmin}} \varphi(x^{\hw{T_{vl}(\lambda_t)}}, y^{T'_{v m'}(\mu_t)}) \, -$ \\[-2ex]
	\hfill $\varphi(x^{\hw{\lambda_t}}, y^{\mu_t})$ \\[1ex]
\ind \ind $\delta := \varphi(x^{\hw{T_{\hat v \hat l}(\lambda_t)}}, y^{T'_{\hat v m'}(\mu_t)}) - \varphi(x^{\hw{\lambda_t}}, y^{\mu_t})$ \\
\ind \kw{if} $W_t \neq \emptyset$ \\
\ind \ind \kw{choose} $\hw{(} \hat w \hw{, \hat l)} \in \underset{\hw{(} w \hw{, l)} \in W_t \hw{\times L}}{\mathrm{argmin}} \varphi(x^{\hw{T_{wl}(\lambda_t)}}, y^{T'_{w m}(\mu_t)}) \, -$ \\[-2ex]
	\hfill $\varphi(x^{\hw{\lambda_t}}, y^{\mu_t})$ \\[1ex]
\ind \ind $\delta' := \varphi(x^{\hw{T_{\hat w \hat l}(\lambda_t)}}, y^{T'_{\hat w m}(\mu_t)}) - \varphi(x^{\hw{\lambda_t}}, y^{\mu_t})$ \\
\ind \kw{if} $\delta \leq \delta'$ \\
\ind \ind $\mu_{t+1} := T'_{\hat v m'}(\mu_t)$ \hfill (move node $\hat v$ to component $m'$)\\
\ind \ind \hw{$\lambda_{t+1} := T_{\hat v \hat l}(\lambda_t)$ \hfill (label node $\hat v$ with label $\hat \lambda$)}\\
\ind \ind $\alpha(\hat v) := 0$ \hfill (mark $\hat v$ as inactive) \\
\ind \kw{else} \\
\ind \ind $\mu_{t+1} := T'_{\hat w m}(\mu_t)$ \hfill (move node $\hat w$ to component $m$)\\
\ind \ind \hw{$\lambda_{t+1} := T_{\hat w \hat l}(\lambda_t)$ \hfill (label node $\hat w$ with label $\hat \lambda$)}\\
\ind \ind $\alpha(\hat w) := 0$ \hfill (mark $\hat w$ as inactive) \\
\ind $V_{t+1} := \{v \in V \,|\, \mu_{t+1}(v) = m \wedge \alpha(v) = 1 \,\wedge$\\
	\hfill $\exists \{v,w\} \in E: \, \mu_{t+1}(w) = m' \}$ \\
\ind $W_{t+1} := \{w \in V \,|\, \mu_{t+1}(w) = m' \wedge \alpha(w) = 1 \,\wedge$\\
	\hfill $\exists \{v,w\} \in E: \, \mu_{t+1}(v) = m \}$ \\
\ind $t := t + 1$ \\
$\hat t := \min \underset{t' \in \{0,\ldots,t\}}{\mathrm{argmin}} \varphi(x^{\hw{\lambda_{t'}}}, y^{\mu_{t'}}) - \varphi(x^{\hw{\lambda_0}}, y^{\mu_0})$ \\
$\Delta_1 := \varphi(x^{\hw{\lambda_{\hat t}}}, y^{\mu_{\hat t}}) - \varphi(x^{\hw{\lambda_0}}, y^{\mu_0})$ \\
$\Delta_2 := \varphi(x^{\hw{\lambda_0}}, y^{T'_{mm'}(\mu)}) - \varphi(x^{\hw{\lambda_0}}, y^{\mu_0})$ \hfill (join $m$ and $m'$)\\
\kw{if} $\min \{\Delta_1, \Delta_2\} \geq 0$ \\
\ind \kw{return} $(0, \mu\hw{, \lambda})$ \\
\kw{else if} $\Delta_1 < \Delta_2$ \\
\ind \kw{return} $(\Delta_1, \mu_{\hat t} \hw{, \lambda_{\hat t}})$ \\
\kw{else} \\
\ind \kw{return} $(\Delta_2, T_{mm'}(\mu) \hw{, \lambda})$
\end{function}

\section{Applications}
\label{section:applications}

We show applications of the proposed problem and algorithms to three distinct computer vision tasks: articulated human body pose estimation, multiple object tracking, and instance-separating semantic segmentation.
For each task, we set up instances of the \nllmp{} from published data, using published algorithms.

\subsection{Articulated Human Body Pose Estimation}
\label{section:pose}

We turn toward applications of the \nllmp{} and the algorithms \alga{} and \algb{} to the task of estimating the articulated poses of all humans visible in an image. 
Pishchulin et al.~\cite{pishchulin-2016} and Insafutdinov et al.~\cite{insafutdinov-2016} approach this problem via a graph decomposition and node labeling problem that we identify as a special case of the \nllmp{} with $c^{\not\sim} = 0$ and with subgraph selection (Section~\ref{sec:suppression}).
We relate their notation to ours in Appendix~\ref{app:pose}.
Nodes in their graph are putative detections of body parts.
Labels define body part classes (head, wrist, etc.).
In our notation, $x_{vl} = 1$ indicates that the putative detection $v$ is a body part of class $l$, and $y_{vw} = 1$ indicates that the body parts $v$ and $w$ belong to distinct humans.
The test set of \cite{insafutdinov-2016} consists of 1758 such instances of the \nllmp{}.

To tackle these instances, Insafutdinov et al.~define and implement a branch-and-cut algorithm in the integer linear programming software framework Gurobi.
We refer to their published \textsc{c++} implementation as \textsc{b\&c}.

\textbf{Cost and time.}
In Fig.~\ref{fig:pose_time_data}, we compare the convergence of \textsc{b\&c} (feasible solutions and lower bounds) with the convergence of our algorithms, \alga{} and \algb{} (feasible solutions only).
Shown in this figure is the average objective value over the test set w.r.t.~the absolute running time.
Thanks to the lower bounds obtained by \textsc{b\&c}, it can be seen from this figure that \alga{} and KL+r arrive at near optimal feasible solutions after $10^{-1}$ seconds, five orders of magnitude faster than \textsc{b\&c}.
This result shows that primal feasible heuristics for the \nllmp{}, such as \alga{} and \algb{}, are practically useful in the context of this application.

\textbf{Application-specific accuracy.}
In Tab.~\ref{tab:pose_results}, we compare feasible solutions output by \alga{} and \algb{} after convergence with those obtained by \textsc{b\&c} after at most three hours.
It can be seen from this table that the feasible solutions output by \alga{} and \algb{} have lower cost and higher application-specific accuracy (Acc) on average. 
\algb{} yields a lower average cost than \alga{} with slightly higher running time. 
The fact that lower cost does not mean higher application-specific accuracy is explained by the application-specific accuracy measure that does not penalize false positives.

The shorter absolute running time of \alga{} and \algb{} allows us to increase the number of nodes from 150, as in \cite{insafutdinov-2016}, to 420. 
It can be seen from the last two rows of Tab.~\ref{tab:pose_results} that this increases the application-specific accuracy by 4\%.

\begin{figure}
\centering
\begin{tikzpicture}[font=\small]
\begin{axis}[
		scaled ticks=base 10:-4,
        width=0.9\columnwidth,
        height=0.35\textwidth,
        xmode=log,
        xlabel={Running time [s]},
        ylabel={Objective Value},
        ylabel near ticks,
        xlabel near ticks,
        mark size=0.1ex,
        legend entries={\alga{}, \algb{}, \textsc{b\&c}, \textsc{b\&c} bound},
        legend style={draw=none, fill=none, nodes={right}, at={(0.95,0.9)},anchor=north east}
    ]

    \addplot[
            mark=*,
            color=r
        ] table [
            col sep=comma,
            x = RT, y = E
        ] {time_data/simple.txt};

    \addplot[
            mark=*,
            color=brown
        ] table [
            col sep=comma,
            x = RT, y = E
        ] {time_data/joint.txt};

    \addplot[
            mark=*,
            color=g
        ] table [
            col sep=comma,
            x = RT, y = E
        ] {time_data/ilp.txt};

    \addplot[
            mark=*,
            color=black
        ] table [
            col sep=comma,
            x = RT, y = LB
        ] {time_data/ilp.txt};
\end{axis}
\end{tikzpicture}
\caption{Convergence of \textsc{b\&c}, \alga{} and \algb{} in an application to the task of articulated human body pose estimation.}
\label{fig:pose_time_data}
\end{figure}
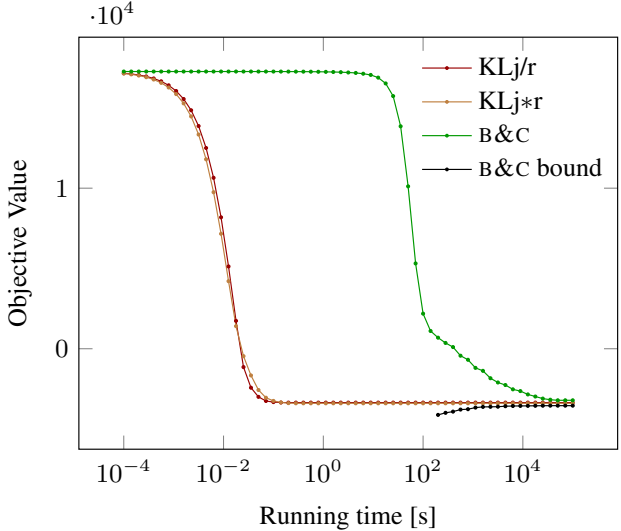

\addtolength{\tabcolsep}{-2pt}
\begin{table}
\centering
\small
\begin{tabular}{@{}clrrrr@{}}
\toprule
$|V|$ & Alg. & AP & Mean cost & Mean time [s] & Median time [s] \\
\midrule
& \cite{insafutdinov-2016} & 65.5 & -3013.30 & 9519.26 & 308.28 \\
\cmidrule{2-6}
& \alga{} & 66.5 & -3352.74 & {\bf0.033} & {\bf0.031} \\
\multirow{-3}{*}{\begin{sideways}150\end{sideways}} & \algb{} & \bf 66.6 & \bf -3419.07 & 0.119 & 0.100 \\
\midrule
& \alga{} & \bf 70.6 & -6184.36 & \bf 0.098 & \bf 0.053 \\
\multirow{-2}{*}{\begin{sideways}420\end{sideways}} & \algb{} & \bf 70.6 & \bf -6608.53 & 0.534 & 0.254 \\
\bottomrule
\end{tabular}\\[-2ex]
\caption{Comparison of \textsc{b\&c}~\cite{insafutdinov-2016}, \alga{} and \algb{} in an application to the task of human body pose estimation.}
\label{tab:pose_results}
\end{table}
\addtolength{\tabcolsep}{2pt}

\subsection{Instance-Separating Semantic Segmentation}

We turn toward applications of the \nllmp{} and the algorithms \alga{} and \algb{} to the task of instance-separating semantic image segmentation.
We state this problem here as an \nllmp{} whose nodes correspond to pixels in a given image, and whose labels define classes of objects (human, car, bicycle, etc.).
In our notation, $x_{vl} = 1$ indicates that the pixel $v$ shows an object of class $l$, and $y_{vw} = 1$ indicates that the pixels $v$ and $w$ belong to distinct objects.

Specifically, we apply the algorithms \alga{} and \algb{} to instances of the \nllmp{} for the task of instance-separating semantic segmentation posed by the KITTI \cite{kitti12cvpr} and Cityscapes \cite{cityscapes16cvpr} benchmarks.
For KITTI, we construct instances of the \nllmp{} from data published by Uhrig et al.~\cite{uhrig16gcpr} as described in detail in Appendix~\ref{app:instance}.
For Cityscapes, we construct instances of the \nllmp{} as follows. 
For costs $c^\nsim$, we again use data of Uhrig et al.~\cite{uhrig16gcpr}.
For costs $c$, we use a ResNet-50~\cite{He_2016_CVPR} network with dilated convolutions~\cite{CP2016Deeplab}. 
We train the network in a fully convolutional manner with image crops (768\,px$\cdot$512\,px) subjected to minimal data augmentation (horizontal flips).
More details are in Appendix~\ref{app:instance}.

\begin{figure}
\centering
\begin{tikzpicture}[font=\small]
\begin{axis}[
		scaled ticks=base 10:-6,
        width=0.8\columnwidth,
        height=0.2\textwidth,
        xmode=log,
        ylabel={Objective Value},
        ylabel near ticks,
        xlabel near ticks,
        mark size=0.1ex,
        legend entries={KITTI \alga{},KITTI \algb{}},
        legend style={draw=none, fill=none, nodes={right}, at={(0.95,0.9)},anchor=north east} 
    ]

    \addplot[
            mark=*,
            color=r
        ] table [
            col sep=comma,
            x = RT, y = E
        ] {time_data/kitti-simple.txt};

    \addplot[
            mark=*,
            color=brown
        ] table [
            col sep=comma,
            x = RT, y = E
        ] {time_data/kitti-joint.txt};
\end{axis}
\end{tikzpicture}
\begin{tikzpicture}[font=\small]
\begin{axis}[
		scaled ticks=base 10:-6,
        width=0.8\columnwidth,
        height=0.2\textwidth,
        xmode=log,
        xlabel={Time [s]},
        ylabel={Objective Value},
        ylabel near ticks,
        xlabel near ticks,
        mark size=0.1ex,
        legend entries={CS \alga{},CS \algb{}},
        legend style={draw=none, fill=none, nodes={right}, at={(0.95,0.9)},anchor=north east} 
    ]

    \addplot[
            mark=*,
            color=r
        ] table [
            col sep=comma,
            x = RT, y = E
        ] {time_data/cs-simple.txt};

    \addplot[
            mark=o,
            color=brown
        ] table [
            col sep=comma,
            x = RT, y = E
        ] {time_data/cs-joint.txt};
\end{axis}
\end{tikzpicture}
\caption{Convergence of \alga{} and \algb{} in an application to the task of instance-separating semantic segmentation.}
\label{fig:kitti_time_data}
\end{figure}
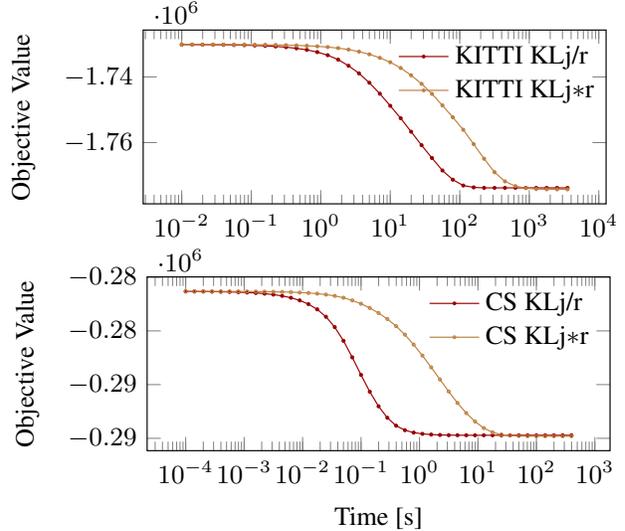

\begin{table}
\setlength{\tabcolsep}{4pt}
\renewcommand{\arraystretch}{1}
\centering
\small
\begin{tabular}{llrr}
\toprule
Data & Algorithm & $\mapr$ & $\mapr^{50\%}$    \\
\midrule
KITTI validation \cite{kitti12cvpr}   & \alga{}     & $\bf 50.5$    & $\bf 82.9$        \\
                                      & \algb{}     & $50.3$        & $82.4$            \\
\midrule
KITTI test \cite{kitti12cvpr} & \cite{uhrig16gcpr} & $41.6$        & $69.1$             \\
                              & \algb{}            & $\bf 43.6$    & $\bf 71.4$         \\
\midrule
Cityscapes validation \cite{cityscapes16cvpr}   & \alga{}     & $11.3$        & $\bf 26.8$    \\
                                                & \algb{}     & $\bf 11.4$    & $26.1$        \\
\midrule
Cityscapes test \cite{cityscapes16cvpr}  & MCG+R-CNN \cite{cityscapes16cvpr} & $4.6$         & $12.9$             \\
                                         & \cite{uhrig16gcpr}                & $8.9$         & $21.1$             \\
                                         & \algb{}                           & $\bf 9.8$     & $\bf 23.2$         \\
\bottomrule
\end{tabular}\\[-2ex]
\caption{Comparison of \alga{} and \algb{} in an application to the task of instance-separating semantic segmentation.}
\label{tab:instance_results_CS}
\end{table}

\textbf{Cost and time.}
In Fig.~\ref{fig:kitti_time_data}, we compare the convergence of \alga{} and \algb{}.
Shown in this figure w.r.t.~the absolute running time are the average objective values over the KITTI and Cityscapes validation sets, respectively.
It can be seen from this figure that \alga{} converges faster than \algb{}. 
Both algorithms are practical for this application but not efficient enough for video processing in real-time.

\textbf{Application-specific accuracy.}
In Tab.~\ref{tab:instance_results_CS}, we compare feasible solutions output by \alga{} and \algb{} after convergence with the output of the algorithm of Uhrig et al~\cite{uhrig16gcpr}.
It can be seen from this table that the application of \alga{} and \algb{} improves the application-specific average precision, $\mapr$ and $\mapr^{50\%}$.
The AP of feasible solutions output by \algb{} for the Cityscapes test set is higher than that of any published algorithm.
A higher AP is reported by Kirillov et al.~\cite{kirillov-2017}, who use the model and algorithms proposed in this paper with improved pairwise $c^\nsim$ and unary $c$ costs.

\subsection{Multiple Object Tracking}
\label{section:tracking}

\begin{table*}[t!!!]
\centering
\small
\begin{tabular}{l r r r  r  r r r  r r r r }
      \toprule
      Method                          & MOTA $\uparrow$ & MOTP $\uparrow$ & FAF $\downarrow$ & MT $\uparrow$ & ML $\downarrow$& FP $\downarrow$ & FN $\downarrow$& ID Sw $\downarrow$& Frag$\downarrow$ & Hz $\uparrow$ & Detector \\
      \midrule
      \cite{FagotBouquet2016}     & 41.0 & 74.8 &  1.3 & 11.6\%      & 51.3\% &7896       & 99224  &430   &963  & 1.1 & Public\\
      \cite{kim_ICCV2015_MHTR} & 42.9 & 76.6 & 1.0 & 13.6\%      & 46.9\% & \bf{5668} & 97919 & 499  & 659 & 0.8 & Public    \\
      \cite{Choi15}                        & 46.4 & 76.6 & 1.6  & \bf{18.3}\% & 41.4\% & 9753 & \bf{87565} & \bf{359}   & \bf{504}  & 2.6 & Public    \\
      \midrule
      \cite{tang-2016}                    & 46.3 & 75.7 & 1.1 & 15.5\% & \bf{39.7}\% & 6373 & 90914 & 657   & 1114  & 0.8 & Public    \\
      \alga{}                                   & \bf{47.6} & \bf{78.5} & 1.0 & 17.0\% & 40.4\% & 5844 & 89093 &629 &768 & \bf{8.3}  & Public \\
      \algb{}                                   & \bf{47.6} & \bf{78.5} & \bf{0.98} & 17.0\% & 40.4\% & 5783 &89160 &627 &761 & 0.7  & Public \\
      \bottomrule
    \end{tabular}
    \caption{Comparison of the algorithms \alga{} and \algb{} in an application to the task of multiple object tracking.\vspace{-9pt}}
    \label{tab:mot-result}
\end{table*}

\begin{figure}
\centering
\begin{tikzpicture}[font=\small]
\begin{axis}[
		scaled ticks=base 10:-6,
        width=0.8\columnwidth,
        height=0.2\textwidth,
        xmode=log,
        xlabel={Running time [s]},
        ylabel={Objective Value},
        ylabel near ticks,
        xlabel near ticks,
        mark size=0.1ex,
        legend entries={\alga{}, \algb{}},
        legend style={draw=none, fill=none, nodes={right}, at={(0.95,0.9)}, anchor=north east} 
    ]

    \addplot[
            mark=*,
            color=r,
        ] table [
            col sep=comma,
            x = RT, y = E
        ] {time_data/mot-simple.txt};

    \addplot[
            mark=*,
            color=brown
        ] table [
            col sep=comma,
            x = RT, y = E
        ] {time_data/mot-joint.txt};
\end{axis}
\end{tikzpicture}
\caption{Convergence of the algorithms \alga{} and \algb{} in an application to the task of multiple object tracking.}
\label{fig:mot_time_data}
\end{figure}
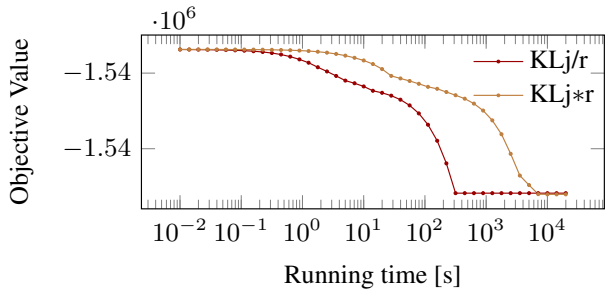

We turn toward applications of the \nllmp{} and the algorithms \alga{} and \algb{} to the task of multiple object tracking. 
Tang et al.~\cite{tang-2015} approach this problem via a graph decomposition and node labeling problem that we identify as a special case of the \nllmp{} with two labels and subgraph selection (Sec.~\ref{sec:suppression}).
We relate their notation to ours rigorously in Appendix~\ref{app:mot}.
Nodes in their graph are putative detections of persons.
In our notation, $x_{vl} = 1$ indicates that the putative detection $v$ is active, and $y_{vw} = 1$ indicates that the putative detections $v$ and $w$ are of distinct persons.
For the test set of the multiple object tracking benchmark~\cite{MilanL0RS16}, Tang et al.~construct seven such instances of the \nllmp{}.

To tackle these large instances, in~\cite{tang-2016} Tang et al. solve the subgraph suppression problem first and independently, by thresholding on the detections scores, and then solve the minimum cost multicut problem for the remaining subgraph by means of the algorithm KLj of \cite{keuper-2015a}, without re-iterating.
Here, we apply to the joint \nllmp{} the algorithms \alga{} and \algb{} and compare their output to that of~\cite{tang-2016} and of other top-performing algorithms \cite{Choi15, FagotBouquet2016, kim_ICCV2015_MHTR}. We use the same data as in~\cite{tang-2016}, therefore the performance gain is due to our algorithms that solve the full problem~\cite{tang-2015}.

\textbf{Cost and time.}
The convergence of the algorithms \alga{} and \algb{} is shown in Fig.~\ref{fig:mot_time_data}. 
It can be seen from this figure that \alga{} converges faster than \algb{}. 

\textbf{Application-specific accuracy.}
We compare the feasible solutions output by \alga{} and \algb{} to the state-of-the-art for the benchmark~\cite{MilanL0RS16}.
To this end, we report in Tab.~\ref{tab:mot-result} the standard CLEAR MOT metric, including: multiple object tracking accuracy (MOTA), multiple object tracking precision (MOTP), mostly tracked object (MT), mostly lost (ML) and tracking fragmentation (FM). 
MOTA combines identity switches (ID Sw), false positives (FP) and false negatives (FN) and is most widely used. 
Our feasible solutions are published also at the benchmark website unser the names NLLMP (\alga{}) and NLLMPj (\algb{}).
It is can be seen from Tab.~\ref{tab:mot-result} that the feasible solutions obtained by \alga{} and \algb{} rank first in MOTA and MOTP.
Compared to \cite{tang-2016}, \alga{} and \algb{} reduce the number of false positives and false negatives.
The average inverse running time per frame of a video sequence (column ``Hz'' in the table) is better for \alga{} by a margin than for any other algorithm.
Overall, these results show the practicality of the \nllmp{} in conjunction with the local search algorithms \alga{} and \algb{} for applications in multiple object tracking.

\section{Conclusion}

We have stated the minimum cost node labeling lifted multicut problem, \nllmp{}, an \textsc{np}-hard combinatorial optimization problem whose feasible solutions define both a decomposition and a node labeling of a given graph.
We have defined and implemented two local search algorithms, \alga{} and \algb{}, that converge monotonously to a local optimum, offering a feasible solution at any time.
We have shown applications of these algorithms to the tasks of 
articulated human body pose estimation, multiple object tracking and instance-separating semantic segmentation, obtaining state-of-the-art application-specific accuracy.
We conclude that the \nllmp{} is a useful mathematical abstraction in the field of computer vision that allows researchers to apply the same optimization algorithm to diverse computer vision tasks.
To foster collaboration between the fields of computer vision and combinatorial optimization, we make our code publicly available at
\url{https://github.com/bjoern-andres/graph}

{\small
\bibliographystyle{ieee}
\bibliography{andres,manuscript,tracking}}

\appendix

\section{Implementation Details}
\label{app:alg_details}

Func.~\ref{function:update-labeling} and \ref{function:update-two-cut} choose local transformations that decrease the cost optimally.
Our implementation computes cost differences incrementally, as proposed by Kernighan and Lin~\cite{kernighan-1970}.
The exact computations are described below.

\textbf{Transforming the Labeling.}
Func.~\ref{function:update-labeling} repeatedly chooses a node $\hat v$ and a label $\hat l$ such that labeling $\hat v$ with $\hat l$ decreases the cost maximally.
I.e., Func.~\ref{function:update-labeling} repeatedly solves the optimization problem
\begin{align}
(\hat v, \hat l) \in \!\! \underset{(v, l) \in V \times L}{\mathrm{argmin}}
    \varphi(x^{T_{vl}(\lambda_t)}, y^{\mu_t}) - \varphi(x^{\lambda_t}, y^{\mu_t}) 
\enspace .
\label{eq:appendix:tansform-label}
\end{align}
While $\varphi(x^{\lambda_t}, y^{\mu_t})$ is constant, it is more efficient to minimize the difference $\varphi(x^{T_{vl}(\lambda_t)}, y^{\mu_t}) - \varphi(x^{\lambda_t}, y^{\mu_t})$ than to minimize $\varphi(x^{T_{vl}(\lambda_t)}, y^{\mu_t})$, as the difference can be computed locally, considering only the neighbors $w$ of $v$ in $G'$:
\begin{align}
& \varphi(x^{T_{vl}(\lambda_t)}, y^{\mu_t}) - \varphi(x^{\lambda_t}, y^{\mu_t}) \nonumber \\
=\ & c_{vl} - c_{v\lambda_t(v)} \nonumber \\
& + \sum_{vw \in A} (1 - y_{\{v,w\}})\left(c^{\sim}_{vw,l\lambda_t(w)} 
    - c^{\sim}_{vw,\lambda_t(v)\lambda_t(w)}\right) \nonumber \\
& + \sum_{wv \in A} (1 - y_{\{v,w\}})\left(c^{\sim}_{wv,\lambda_t(w)l} 
    - c^{\sim}_{wv,\lambda_t(w)\lambda_t(v)}\right) \nonumber \\
& + \sum_{vw \in A} y_{\{v,w\}}\left(c^{\not\sim}_{vw,l\lambda_t(w)} 
    - c^{\not\sim}_{vw,\lambda_t(v)\lambda_t(w)}\right) \nonumber \\
& + \sum_{wv \in A} y_{\{v,w\}}\left(c^{\not\sim}_{wv,\lambda_t(w)l} 
    - c^{\not\sim}_{wv,\lambda_t(w)\lambda_t(v)}\right) \nonumber \\
=:\ & \Delta_{t,vl}
\enspace .
\end{align}
Initially, i.e., for $t = 0$, we compute $\Delta_{0,vl}$ for every node $v$ and every label $l$.
In subsequent iterations, i.e., for $t \in \mathbb{N}$ and the minimizer $(\hat v, \hat l)$ of \eqref{eq:appendix:tansform-label} chosen in this iteration, we update cost differences for all neighbors $w$ of $\hat v$ in $G'$ and all labels $l \in L$.
The update rule is written below for an edge $(w, \hat v) \in A$.
The update for an edge in the opposite direction is analogous.
Below, \eqref{eq:alt_cancel} subtracts the costs due to $\hat v$ being labeled $\lambda_t(\hat v)$ (which is possibly outdated), while \eqref{eq:alt_update} adds the costs due to $\hat v$ having obtained a new and possibly different label $\hat l$.
\begin{align}
\Delta_{t+1, wl} = 
& \Delta_{t, wl} \nonumber \\
& - (1 - y_{\{w, \hat v\}})\left(c^{\sim}_{w\hat v,l\lambda_t(\hat v)} - c^{\sim}_{w\hat v,\lambda_t(w)\lambda_t(\hat v)}\right) \nonumber \\
& - y_{\{w, \hat v\}}\left(c^{\not\sim}_{w\hat v,l\lambda_t(\hat v)} - c^{\not\sim}_{w\hat v,\lambda_t(w)\lambda_t(\hat v)}\right) \label{eq:alt_cancel} \\ 
& + (1 - y_{\{w, \hat v\}})\left(c^{\sim}_{w\hat v,l\hat l} - c^{\sim}_{w\hat v,\lambda_t(w)\hat l}\right) \nonumber \\ 
& + y_{\{w, \hat v\}} \left(c^{\not\sim}_{w\hat v,l\hat l} - c^{\not\sim}_{w\hat v,\lambda_t(w)\hat l}\right) \enspace . \label{eq:alt_update}
\end{align}

We solve \eqref{eq:appendix:tansform-label} by means of a priority queue in time complexity 
$\mathcal{O}(|V| \log|V| + |V| (|L| + \log|V|) \deg G')$ with $\deg G'$ the node degree of $G'$.
For sparse graphs and constant number $|L|$ of labels, this is $\mathcal{O}(|V| \log|V|)$.

\textbf{Transformation of Labeling and Decomposition.}
The algorithm KLj of \cite{keuper-2015a} for the minimum cost lifted multicut problem generalizes the Kernighan-Lin-Algorithm~\cite{kernighan-1970} for the minimum cost multicut problem.
The algorithms \alga{} and \algb{} we define further generalize KLj to the \nllmp{}.
The critical part is Func.~\ref{function:update-two-cut} that solves the optimization problem
\begin{align}
    (\hat v, \hat l) \in \underset{(v, l) \in V_t \times L}{\mathrm{argmax}} \varphi(x^{T_{vl}(\lambda_t)}, y^{T'_{v m'}(\mu_t)}) - \varphi(x^\lambda_t, y^{\mu_t})
\end{align}
Let us consider w.l.o.g. two sets of vertices $A$ and $B$ representing two neighboring components of the graph $G$.
Then we compute $\forall v \in A \cup B, \forall l \in L$:

\begin{align}
\Delta_{vl} &=  c_{v\lambda_t(v)} - c_{vl} \, + \\
&\begin{cases}
\sum_{w \in A \setminus \{v\}} & c^{\sim}_{vw,\lambda_t(v)\lambda_t(w)} - c^{\not\sim}_{vw,l\lambda_t(w)} \\
\sum_{w \in B} & c^{\not\sim}_{vw,\lambda_t(v)\lambda_t(w)} - c^{\sim}_{vw,l\lambda_t(w)} \\
\sum_{w \notin A \cup B} & c^{\not\sim}_{vw,\lambda_t(v)\lambda_t(w)} - c^{\not\sim}_{vw,l\lambda_t(w)} \label{eq:gains_precompute}
\end{cases} \enspace ,
\end{align}
where $w \in \mathcal{N}_{G'}(v)$.
In eq.~\eqref{eq:gains_precompute} the first two cases are exactly the same as given in~\cite{kernighan-1970} for the edges \emph{between} partitions $A$ and $B$.
But in our case changing vertex's class label may affect the cut costs of edges between $A$ and $B$ and any other partition.
Also, we have join and cut costs.

Let us assume w.l.o.g. that vertex $\hat v$ was chosen to be moved from set $A$ to set $B$, i.e. $A = A \setminus \{\hat v\}$.
Now we can update the expected gains of $\forall w \in \mathcal{N}_{G'}(\hat v), \forall l \in L$:

\begin{align}
\Delta_{wl} = \Delta_{wl} &- \left(c^{\sim}_{\hat vw,\lambda_t(\hat v)\lambda_t(w)} - c^{\not\sim}_{\hat vw,\lambda_t(\hat v)l}\right) \nonumber \\
&+ c^{\not\sim}_{\hat vw,\hat l\lambda_t(w)} - c^{\sim}_{\hat vw,\hat ll} \enspace , \quad \text{if $w \in A$} \enspace , \\
\Delta_{wl} = \Delta_{wl} &- \left(c^{\not\sim}_{\hat vw,\lambda_t(\hat v)\lambda_t(w)} - c^{\sim}_{\hat vw,\lambda_t(\hat v)l}\right) \nonumber \\
&+ c^{\sim}_{\hat vw,\hat l\lambda_t(w)} - c^{\not\sim}_{\hat vw,\hat ll} \enspace , \quad \text{if $w \in B$} \enspace .
\end{align}
After that $B = B \cup \{\hat v\}$.
In the above equations, the expression in parenthesis cancels the current contribution for vertex $w$, that assumed $\hat v$ was labeled $\lambda_t(v)$ and belonged to partition $A$. For the case when $|L| = 1$ and $c^{\not\sim} = c^{\sim}$ the above equations simplify to exactly the ones as in~\cite{kernighan-1970}, but multiplied by 2, because in our objective we have two terms that operate on the edges simultaneously.

As we generalize \cite{keuper-2015a} by an additional loop over the set $L$ of labels, the analysis of the time complexity carries over from \cite{keuper-2015a} with an additional multiplicative factor $|L|$.

\section{Articulated Human Body Pose Estimation}
\label{app:pose}

\subsection{Problem Statement}

Pishchulin et al.~\cite{pishchulin-2016} introduce a binary cubic problem w.r.t.~a set $C$ of body joint classes and a set $D$ of putative detections of body joints.
Every feasible solution is a pair $(x,y)$ with $x: D \times C \to \{0,1\}$ and $y: \tbinom{D}{2} \to \{0,1\}$, constrained by the following system of linear inequalities:
\begin{align}
\forall d \in D \forall cc' \in \tbinom{C}{2}:
    & \quad x_{dc} + x_{dc'} \leq 1
    \label{eq:constraint-map}\\
\forall dd' \in \tbinom{D}{2}:
    & \quad y_{dd'} \leq \sum_{c \in C} x_{dc}
    \nonumber\\
    & \quad y_{dd'} \leq \sum_{c \in C} x_{d'c}
    \label{eq:constraint-suppression-1}\\
\forall dd'd'' \in \tbinom{D}{3}:
    & \quad y_{dd'} + y_{d'd''} - 1 \leq y_{dd''}
    \label{eq:constraint-cycle}
\end{align}
The objective function has the form below with coefficients $\alpha$ and $\beta$.
\begin{align}
\sum_{d \in D} \sum_{c \in C} 
    \alpha_{dc} \, x_{dc}
+ \sum_{d d' \in \tbinom{D}{2}} \sum_{c,c' \in C} 
    \beta_{dd'cc'} x_{dc} x_{d'c'} y_{dd'}
\end{align}

We identify the solutions of this problem with the solutions of the \nllmp{} w.r.t.~the complete graphs $G = G' = (D, \tbinom{D}{2})$, 
the label set $L = C \cup \{\epsilon\}$
and the costs $c^{\not\sim} = 0$ and
\begin{align}
c_{vl} & := \begin{cases}
    \alpha_{vl} & \textnormal{if}\ l \in C \\
    0 & \textnormal{if}\ l = \epsilon \\
\end{cases} \\
c^\sim_{vw,ll'} & := \begin{cases}
    \beta_{vwll'} & \textnormal{if}\ l \in C \wedge  l' \in C \\
    0 & \textnormal{if}\ l = \epsilon \ \textnormal{xor}\ l' = \epsilon \\
    \infty & \textnormal{if}\ l = l' = \epsilon
\end{cases}
\enspace .
\end{align}
Note that in \cite{pishchulin-2016}, $y_{dd'} = 1$ indicates a join.
In our \nllmp{}, $y_{dd'} = 1$ indicates a cut.

\subsection{Further Results}

Quantitative results for each body joint are shown in Tab.~\ref{tab:detailed_pose_results}.
Qualitative results for the MPII Human Pose dataset are shown in Fig.~\ref{fig:qualitative_mpii_multi}.

\addtolength{\tabcolsep}{-2pt}
\begin{table}
\centering
\small
\begin{tabular}{@{}clrrrrrrrr}
\toprule
$|V|$ & Alg. & Head & Sho & Elb & Wri & Hip & Knee & Ank & AP \\
\midrule
& \cite{insafutdinov-2016} & 84.9  & 79.2  & 66.4  & 52.3  & 65.5  & 59.2 & 51.2 & 65.5 \\
\cmidrule{2-10}
& \alga{} & \bf 87.1  & 80.0  & \bf 66.8  & \bf 53.6  & 66.1  & 60.0 & 51.8 & 66.5 \\
\multirow{-3}{*}{\begin{sideways}150\end{sideways}} & \algb{} & 86.8  & \bf 80.2  & 67.5  & 53.5  & \bf 66.3  & \bf 60.3 & \bf 51.9 & \bf 66.6 \\
\midrule
& \alga{} & \bf 90.2  & \bf 85.2  & 71.5  & 59.5  & \bf 71.3  & \bf 63.1 & 53.1 & \bf 70.6 \\
\multirow{-2}{*}{\begin{sideways}420\end{sideways}} & \algb{} & 89.8  & \bf 85.2  & \bf 71.8  & \bf 59.6  & 71.1  & 63.0 & \bf 53.5 & \bf 70.6 \\
\bottomrule
\end{tabular}
\caption{Comparison of \textsc{b\&c}~\cite{insafutdinov-2016}, \alga{} and \algb{} in an application to the task of articulated human body pose estimation.}
\label{tab:detailed_pose_results}
\end{table}
\addtolength{\tabcolsep}{2pt}

\begin{figure*}
  \centering
  \begin{tabular}{c c c c}
    \includegraphics[height=0.153\linewidth]{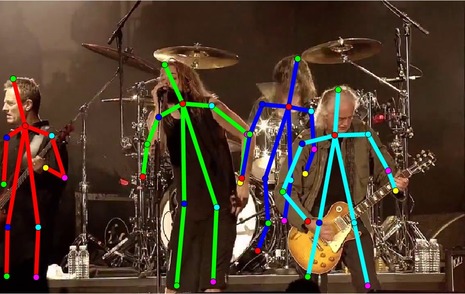} &
    \includegraphics[height=0.153\linewidth]{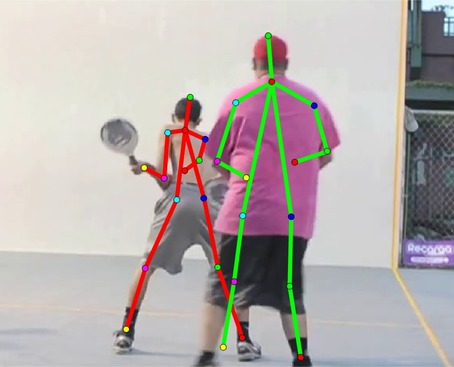} &
    \includegraphics[height=0.153\linewidth]{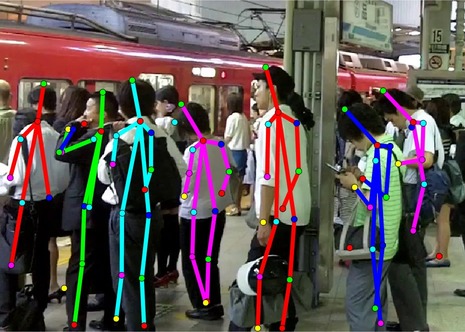} &
    \includegraphics[height=0.153\linewidth]{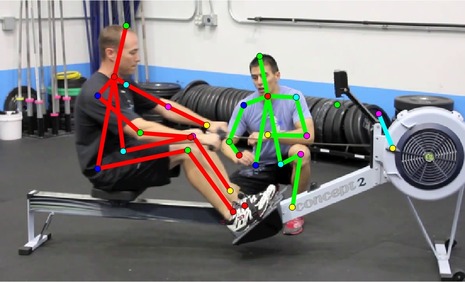} \\
    1&2&3&4\\
  \end{tabular}

  \begin{tabular}{c c c}
    \includegraphics[height=0.17\linewidth]{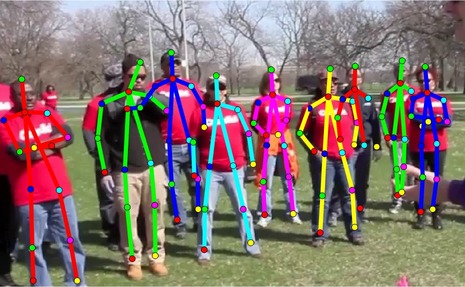} &
    \includegraphics[height=0.17\linewidth]{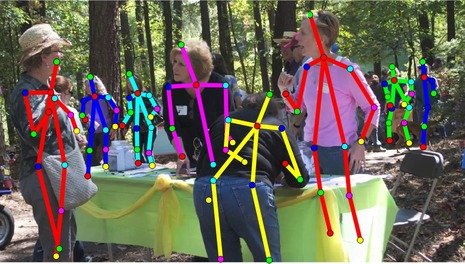} &
    \includegraphics[height=0.17\linewidth]{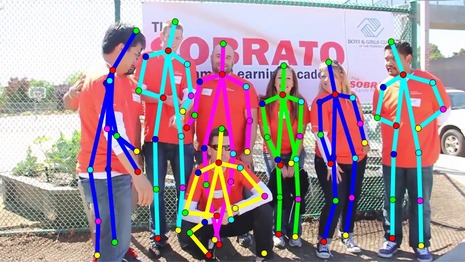} \\
    5&6&7\\
  \end{tabular}
  
  \begin{tabular}{c c c c}
    \includegraphics[height=0.16\linewidth]{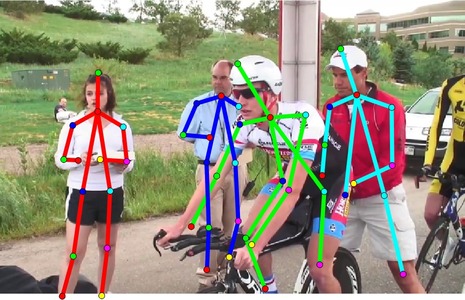} &
    \includegraphics[height=0.16\linewidth]{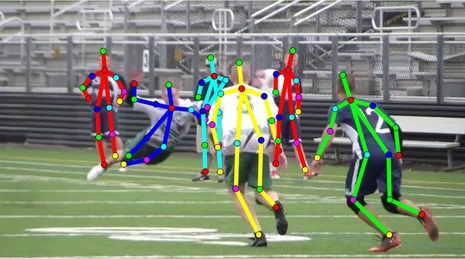} &
    \includegraphics[height=0.16\linewidth]{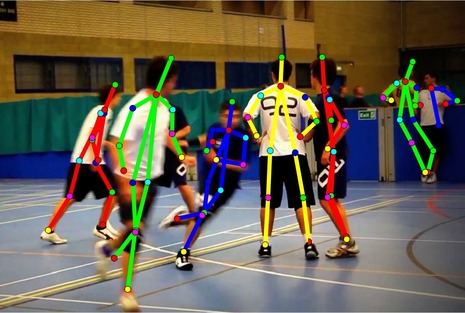} &
    \includegraphics[height=0.16\linewidth]{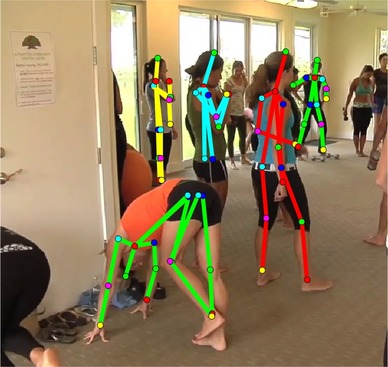} \\
    8&9&10&11\\
  \end{tabular}

  \begin{tabular}{c c c c}
    \includegraphics[height=0.163\linewidth]{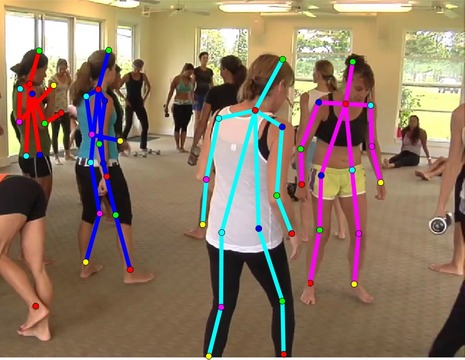} &
    \includegraphics[height=0.163\linewidth]{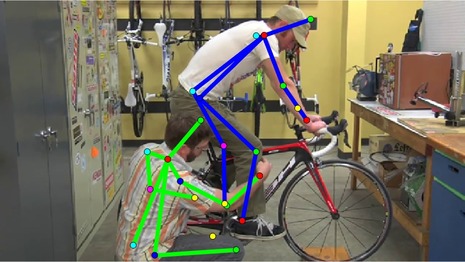} &
    \includegraphics[height=0.163\linewidth]{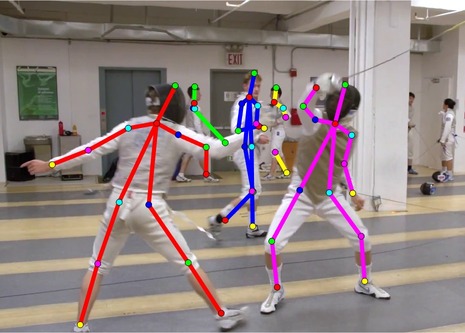} &
    \includegraphics[height=0.163\linewidth]{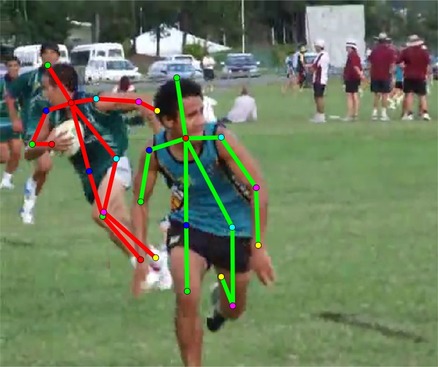} \\
    12&13&14&15\\
  \end{tabular}

  \begin{tabular}{c c c}
    \includegraphics[height=0.186\linewidth]{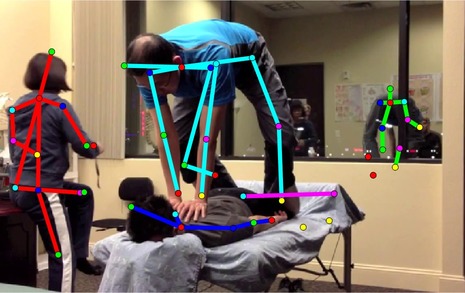} &
    &
    \\
    16 & &\\
  \end{tabular}
  
  \caption{Pose estimation results on the MPII Human Pose dataset.}
  \label{fig:qualitative_mpii_multi}
\end{figure*}

\section{\mbox{Instance-Separating Semantic Segmentation}}
\label{app:instance}

We tackle the problem of instance-separating semantic segmentation by
adapting the approach of Uhrig et al.~\cite{uhrig16gcpr}. They propose three complementary representations,
which are learned jointly by a fully convolutional network (FCN)~\cite{fcn15cvpr}, that facilitate the
problem of separating individual objects: Semantics, depth, and directions towards object centers. To extract
object instances, a template matching approach was proposed, followed by a proposal fusion.

Instead of template matching and clustering, we rely on a generic graphical formulation of the problem using
only the three predicted output scores from the network of Uhrig et al.~\cite{uhrig16gcpr}, together with a
suitable formulation of unary $c$ and pairwise terms $c^\sim$ and $c^\nsim$. As there might be up to two million
nodes for a direct mapping of pixel scores to the graph, we report performance on different down-sampled
versions to reduce overall computation time and reduce the impact of noise in high resolutions. Results on
KITTI were achieved on half of the input resolution, for Cityscapes we down-sample the FCN scores by a factor
of eight before the graph optimization.

\subsection{Cut Costs Details}

To define cut costs between connected pixels in the graph, we use an equally weighted sum of the three following
components:

The probability of fusing two pixels $v$ and $w$ of different \textbf{semantic classes} is $1-p(\lambda(v)=a, \lambda(w)=b)$,
the probability of confusing label class $a$ and $b$, which was computed from the training set.

To incorporate the depth and center direction channels, we neither use scores nor argmax predictions directly.
Instead, we weight the predicted softmax scores for all non-background classes with their corresponding class to
recover a continuous center direction and depth map. As objects at different \textbf{depth values} should be
separated, we generate higher cut probabilities for those pixels. From training data, we found the probability
of splitting two neighboring pixels to be one when the predicted depth values differ by more than 1.6 units.

If \textbf{center directions} have opposite orientations, there should be a high probability for splitting the
two pixels. However, opposite directions also appear at the center of an object. Therefore, we define the cut
probability as the minimum of an angular inconsistency, which punishes two pixels that point at different directions,
as well as a center inconsistency, which punishes if two pixels do not point at each other, \cf
Fig.~\ref{figure:direction_consistency}. This induces high cut probabilities at the borders of objects, as directions
of pixels should have opposite center direction. The probability of splitting two neighbors due to direction
inconsistency was found to be one at 90 degrees.

\begin{figure}
\centering
\includegraphics[width=0.35\textwidth]{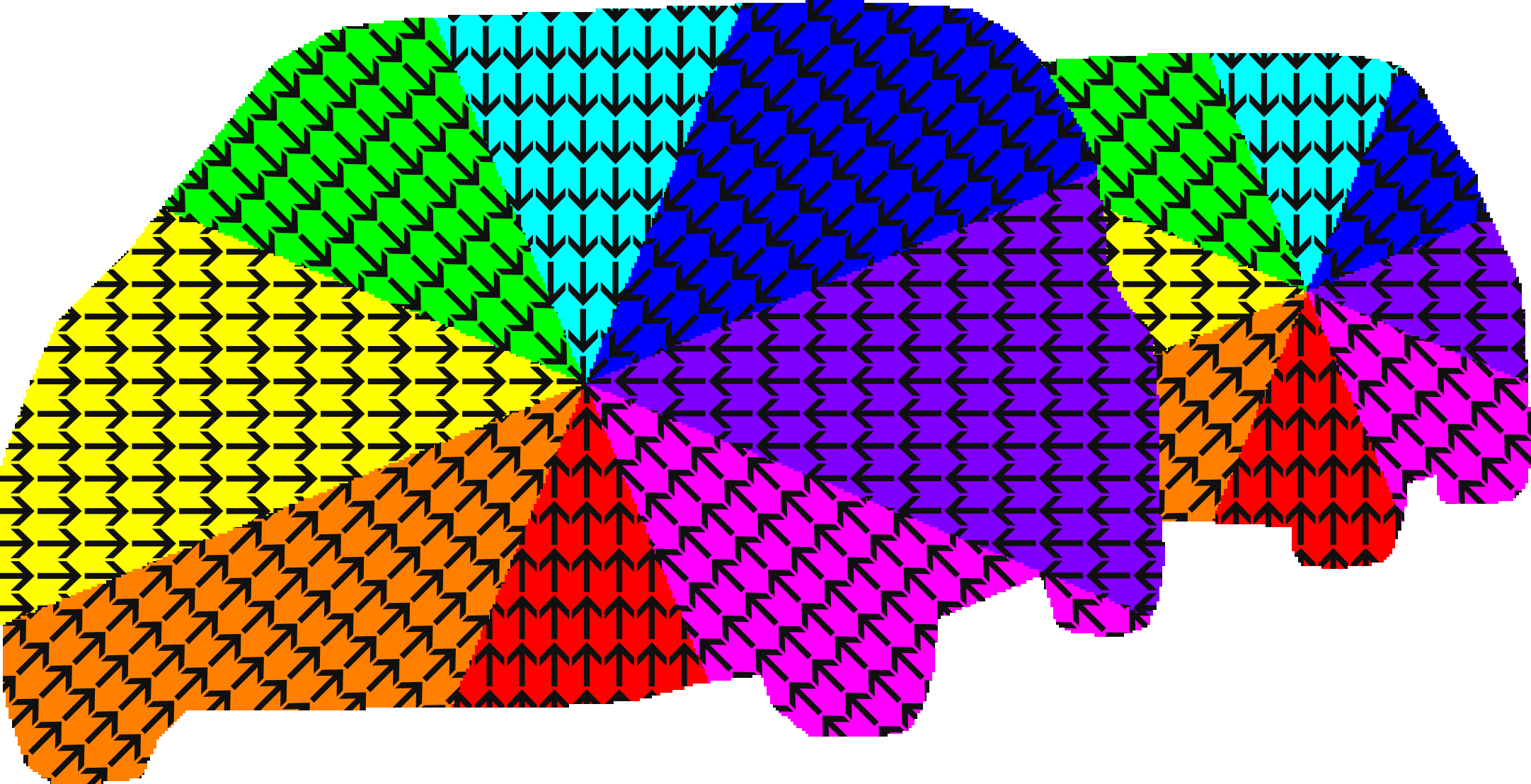}
\caption{Instance center directions with color coding from~\cite{uhrig16gcpr}. Near object centers, directions
point towards each other (center consistency). Within colored regions, directions have similar angles
(angular consistency). Along object borders, directions are inconsistent in both ways.}
\label{figure:direction_consistency}
\end{figure}

\subsection{Dataset Specifics}

For the KITTI dataset \cite{kitti14instances,kitti12cvpr}, the only pixel-level annotated object class
is \textit{car}. For Cityscapes \cite{cityscapes16cvpr} however, there are 8 different object classes
(\textit{person, rider, car, truck, bus, train, motorcycle, bicycle}), together with 11 background classes
versus 1 background class for KITTI. We found that the network model used by Uhrig et al. \cite{uhrig16gcpr}
performs close to optimum for semantic labeling on KITTI data, however has some flaws on Cityscapes.

Therefore we chose a more sophisticated network structure, which performs much better on the many different
classes on Cityscapes. We use a ResNet \cite{He_2016_CVPR} with dilated
convolutions \cite{CP2016Deeplab} for cut costs $c$, namely the unary terms consisting of scores for the problem
of semantic labeling, which was trained independently on the Cityscapes dataset~\cite{cityscapes16cvpr}.

To obtain the unaries for Cityscapes, we use a slightly modified ResNet-50 network. We introduce dilated convolutions in the conv4\_x and conv5\_x layers to increase the output resolution from $\nicefrac{1}{32}$ to $\nicefrac{1}{8}$ of the input resolution. We then remove the final average pooling layer and for classification use a convolutional layer with $5 \times 5$ dilated kernels with a dilation size of 12. This is identical to the best performing basic ResNet-50 variant reported in (\cite{WuSH16b}, Table 1).

Due to GPU memory constraints, we train with $512px \times 768px$ crops randomly sampled from the full-resolution training set images. We apply minimal data augmentation, i.e. random horizontal flips, and train with a batch size of 5. We train the network for 60000 iterations using the Adam solver with an initial learning rate of $0.000025$, weight decay of $0.0005$ and momentum of $0.9$. We use the "poly" learning rate policy to gradually reduce the learning rate during training with the power parameter set to $0.9$, which as reported in both \cite{Liu2015ParseNetLW} and \cite{ChenPK0Y16} yields better results than the commonly used "step" reduction policy.

At test-time we apply the network to overlapping $1024px \times 768px$ crops of the full-resolution test set images and stitch the results to obtain the final predictions.

For KITTI however, we stick with the original semantic scores. The only adaptation for our definition of
the semantic cut costs $c$ is an additional weighting of the semantic scores: As depth and center directions
are only estimated for objects, all three channels contain knowledge of the objectness of a certain pixel.
We therefore use the semantic scores weighted by the depth and direction scores for objects as unaries.
This increases robustness of the semantics as all three channels must agree to achieve high scores.

\subsection{Post Processing}

Using the unary and pairwise terms defined above, we solve the graph for labels and components with our proposed
algorithms \alga{} and \algb{}. As the center direction representation inherently cannot handle cases of full
occlusions, e.g. if a bicycle is split into two connected components by a pedestrian in front of it, we apply a
similar component fusion technique as proposed in~\cite{uhrig16gcpr}: We accumulate direction predictions within
each component and fuse it with another suitable component when direction predictions are clearly overshooting
into a certain direction. We compare performance of the raw graph output as well as the fused instances in
Tab.~\ref{tab:instance_results_CS_KITTI} (top).

\begin{table}
\setlength{\tabcolsep}{4pt}
\renewcommand{\arraystretch}{1}
\centering
\small
\begin{tabular}{l@{\hskip 12pt}ccc}
\toprule
Algorithm                         & Dataset     & $\mapr$       & $\mapr^{50\%}$    \\
\midrule
Ours \alga{} (raw)                & KITTI val   & $43.0$        & $72.5$            \\
Ours \algb{} (raw)                & KITTI val   & $43.5$        & $72.6$            \\
Ours \alga{} (fused)              & KITTI val   & \bst{50.5}    & \bst{82.9}        \\
Ours \algb{} (fused)              & KITTI val   & $50.3$        & $82.4$            \\
\midrule
Pixel Encoding \cite{uhrig16gcpr} & KITTI test  & $41.6$        & $69.1$            \\
Ours \algb{} (fused)              & KITTI test  & \bst{43.6}    & \bst{71.4}        \\
\bottomrule
\end{tabular}
\caption{
Comparison of algorithms for instance segmentation on the KITTI~\cite{kitti14instances} datasets
using the mean average precision metrics introduced in \cite{cityscapes16cvpr}.
}
\label{tab:instance_results_CS_KITTI}
\end{table}

\subsection{Detailed Results}

As there are different metrics used by related approaches, we report performance on the Cityscapes~\cite{cityscapes16cvpr}
and KITTI~\cite{kitti14instances,kitti12cvpr} dataset using both proposed metrics. The instance score required for the evaluation
on Cityscapes was chosen as the size of the instance in pixels multiplied by its mean depth - this score achieved slightly better
results compared to a constant score.

For KITTI, we outperform all existing approaches using the Cityscapes metric (without adapting the semantic scores of
Uhrig et al. \cite{uhrig16gcpr}), which averages precision and recall performance
for multiple overlaps, \cf Tab.~\ref{tab:instance_results_CS_KITTI} (bottom). We evaluate the performance using \alga{} or \algb{}
and raw graph output (raw) or the post-processed results using above described fusion (fused) in
Tab.~\ref{tab:instance_results_CS_KITTI} (top). Using the KITTI metrics,
we perform among the best results while having a slight preference of Recall over Precision, \cf
Tab.~\ref{tab:instance_results_KITTI}.

\begin{table}
\setlength{\tabcolsep}{5pt}
\renewcommand{\arraystretch}{1}
\centering
\small
\begin{tabular}{l@{\hskip 12pt}cccccc}

\toprule
Alg.                    & IoU        & AvgFP       & AvgFN       & InsPr      & InsRe      & InsF1      \\
\midrule
\cite{zhang2015iccv}    & $77.4$     & $0.479$     & $0.840$     & $48.9$     & $43.8$     & $46.2$     \\
\cite{zhang2016cvpr}    & $77.0$     & $0.375$     & $1.139$     & $65.3$     & $50.0$     & $56.6$     \\
\cite{uhrig16gcpr}      & $84.1$     & $0.201$     & $0.159$     & \bst{86.3} & $74.1$     & \bst{79.7} \\
\cite{ren16arxiv}       & \bst{87.4} & \bst{0.118} & $0.278$     & -          & -          & -          \\
Ours                    & $83.9$     & $0.555$     & \bst{0.111} & $69.2$     & \bst{76.5} & $72.7$     \\
\bottomrule
\end{tabular}
\caption{
Comparison of algorithms for instance segmentation on the KITTI test dataset~\cite{kitti14instances} using metrics proposed in \cite{zhang2016cvpr}. Ours describes the performance of our \algb{} variant.
}
\label{tab:instance_results_KITTI}
\end{table}

For Cityscapes, we report evaluation metrics using both the raw scores of Uhrig et al. \cite{uhrig16gcpr} as well
as our final proposed model using the semantic scores of a ResNet \cite{He_2016_CVPR} together with the center direction
and depth scores of Uhrig et al. \cite{uhrig16gcpr}, \cf Tab.~\ref{tab:instance_results_CS_CS} (top).
Using our adapted ResNet version, we outperform the currently published state-of-the art, \cf
Tab.~\ref{tab:instance_results_CS_CS} (bottom). Note that we report significantly better performance for the large
vehicle classes truck, bus, and trains despite
starting from the same FCN output, \cf Tab.~\ref{tab:class_results_CS}.
This comes from incorporating confusion probabilities between unreliable classes as well as optimizing jointly for semantics
and instances.

\begin{table}
\setlength{\tabcolsep}{3pt}
\renewcommand{\arraystretch}{1}
\centering
\small
\begin{tabular}{l@{\hskip 12pt}cccccccc}

\toprule
& \rotatedlabel{person}
& \rotatedlabel{rider}
& \rotatedlabel{car}
& \rotatedlabel{truck}
& \rotatedlabel{bus}
& \rotatedlabel{train}
& \rotatedlabel{motorcycle}
& \rotatedlabel{bicycle} \\
\midrule
\cite{cityscapes16cvpr} &   $ 5.6$   &   $ 3.9$   &   $26.0$   & $13.8$     & \bst{26.3} & $15.8$     &   $ 8.6$   &   $ 3.1$    \\
\cite{uhrig16gcpr}      & \bst{31.8} & \bst{33.8} & $37.8$     &   $ 7.6$   &   $12.0$   &   $ 8.5$   & $20.5$     & \bst{17.2}  \\
Ours                    &  $18.4$    & $29.5$     & \bst{38.3} & \bst{16.1} & $21.5$     & \bst{24.5} & \bst{21.4}  & $16.0$     \\
\bottomrule
\end{tabular}
\caption{
Comparison of performance on Cityscapes test using the mean average precision metric $\mapr^{50\%}$~\cite{cityscapes16cvpr}.
Ours describes the performance of our \algb{} (ResNet) variant.
}
\label{tab:class_results_CS}
\end{table}

\subsection{Qualitative Results}

See Fig.~\ref{fig:csQualitativeResults} for some qualitative results for our instance-separating semantic segmentation on the
Cityscapes validation dataset~\cite{cityscapes16cvpr}. It can be seen that we perform equally well for large and small objects,
we only tend to fuse pedestrians too often, which explains the worse performance on pedestrians - \cf the mother with her child
on the right in the last row of Fig.~\ref{fig:csQualitativeResults}. Also, the impact of the proposed post-processing based on
the fusion algorithm proposed by Uhrig et al.~\cite{uhrig16gcpr} can be seen clearly: Due to noisy predictions, the raw graph
output is often highly over-segmented. However, after applying the fusion step, most objects are correctly fused.

\begin{figure*}
\begin{center}
  \begin{tikzpicture}[scale=.25]
    \node at ( 0,4)    {RGB Image};
    \node at (10,4)    {Semantics GT};
    \node at (20,4)    {Semantics Pred.};
    \node at (30,4)    {Instance GT};
    \node at (40,4)    {Raw Inst. Pred.};
    \node at (50,4)    {Fused Inst. Pred.};

    \node at ( 0,0)    {\includegraphics[width=0.15\textwidth]{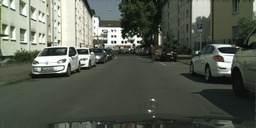}};
    \node at (10,0)    {\includegraphics[width=0.15\textwidth]{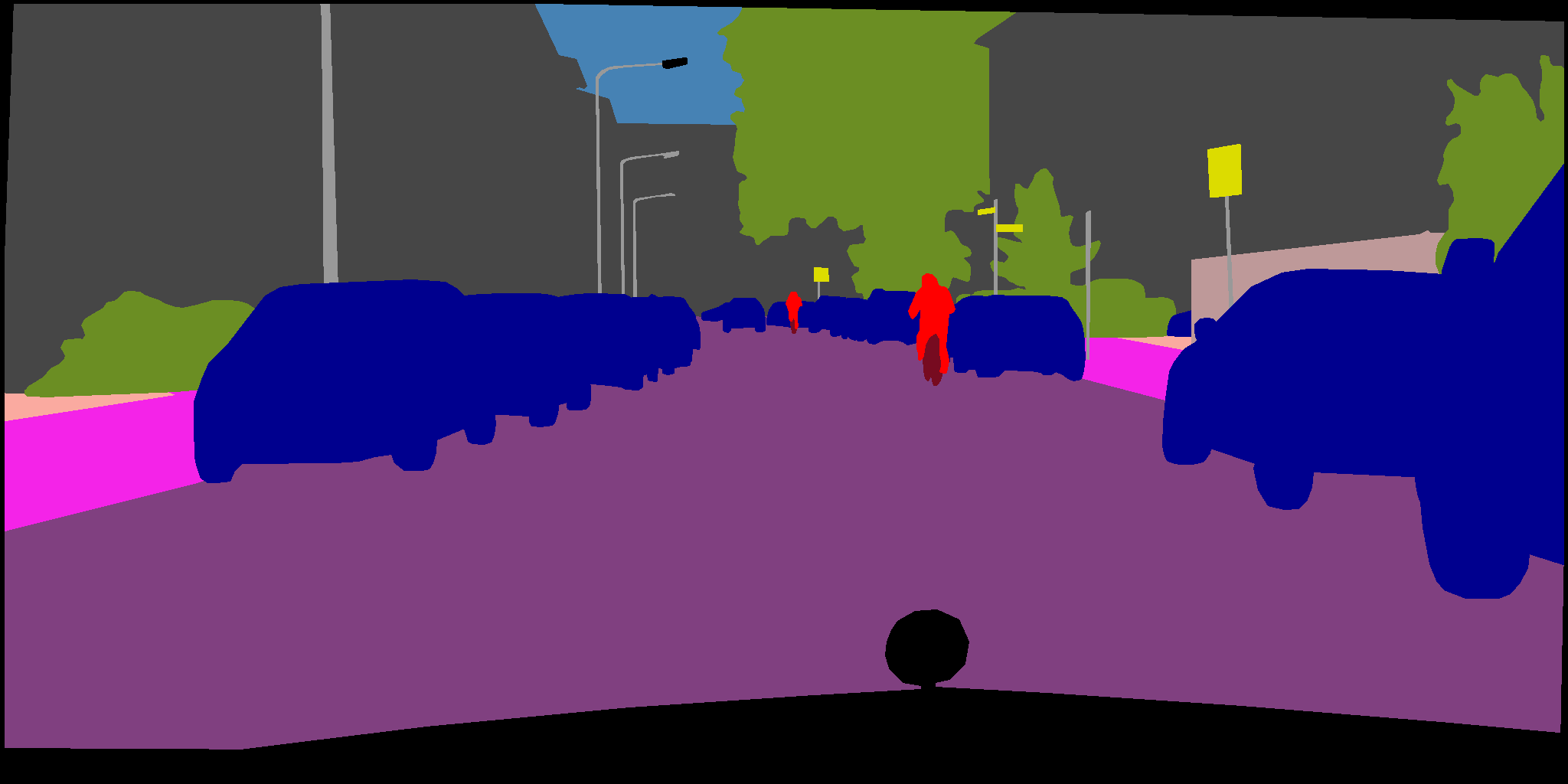}};
    \node at (20,0)    {\includegraphics[width=0.15\textwidth]{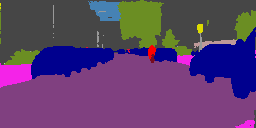}};
    \node at (30,0)    {\includegraphics[width=0.15\textwidth]{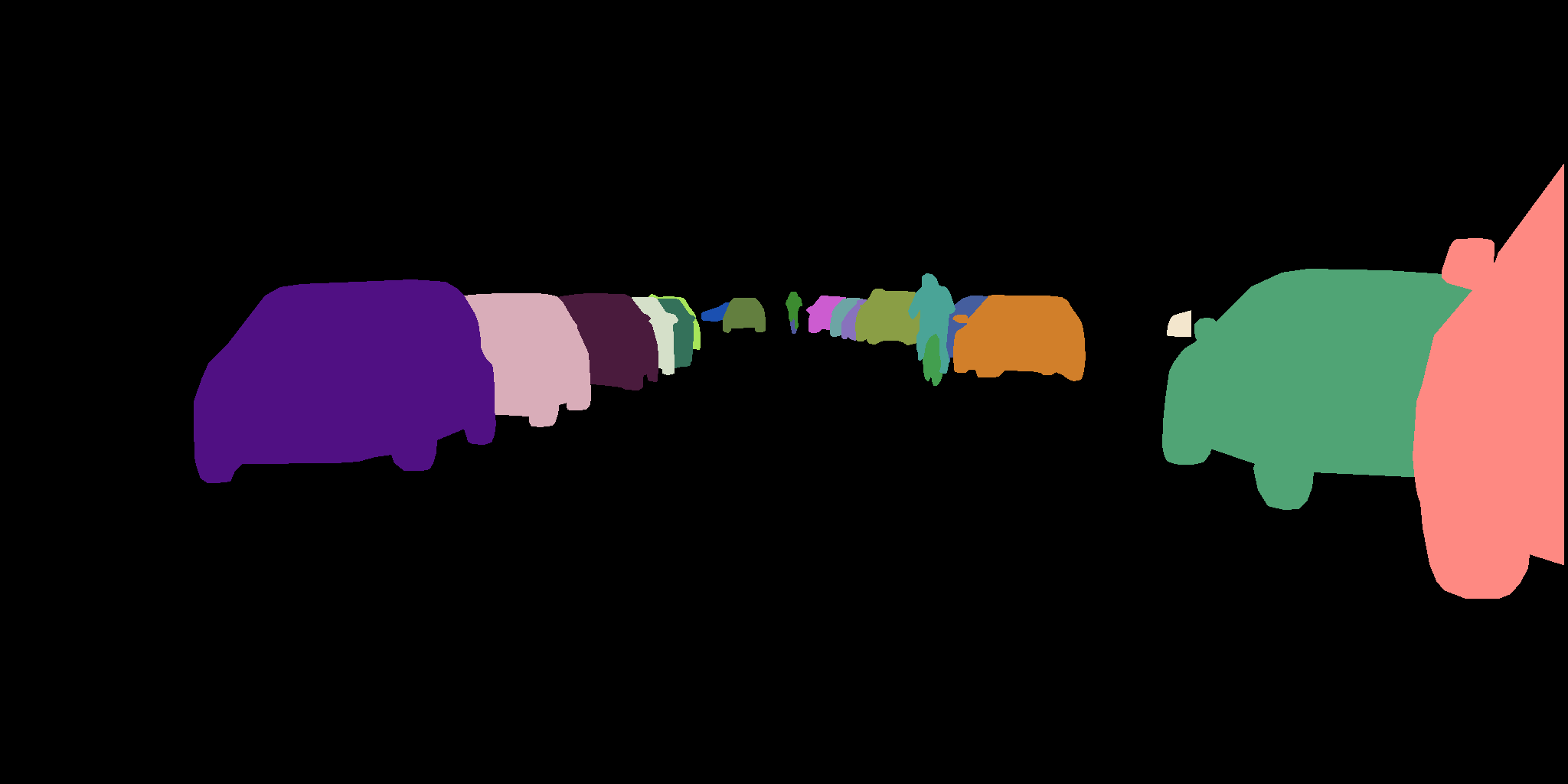}};
    \node at (40,0)    {\includegraphics[width=0.15\textwidth]{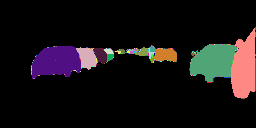}};
    \node at (50,0)    {\includegraphics[width=0.15\textwidth]{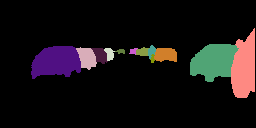}};

    \node at ( 0,-5.1) {\includegraphics[width=0.15\textwidth]{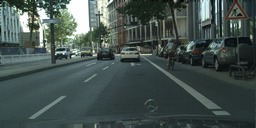}};
    \node at (10,-5.1) {\includegraphics[width=0.15\textwidth]{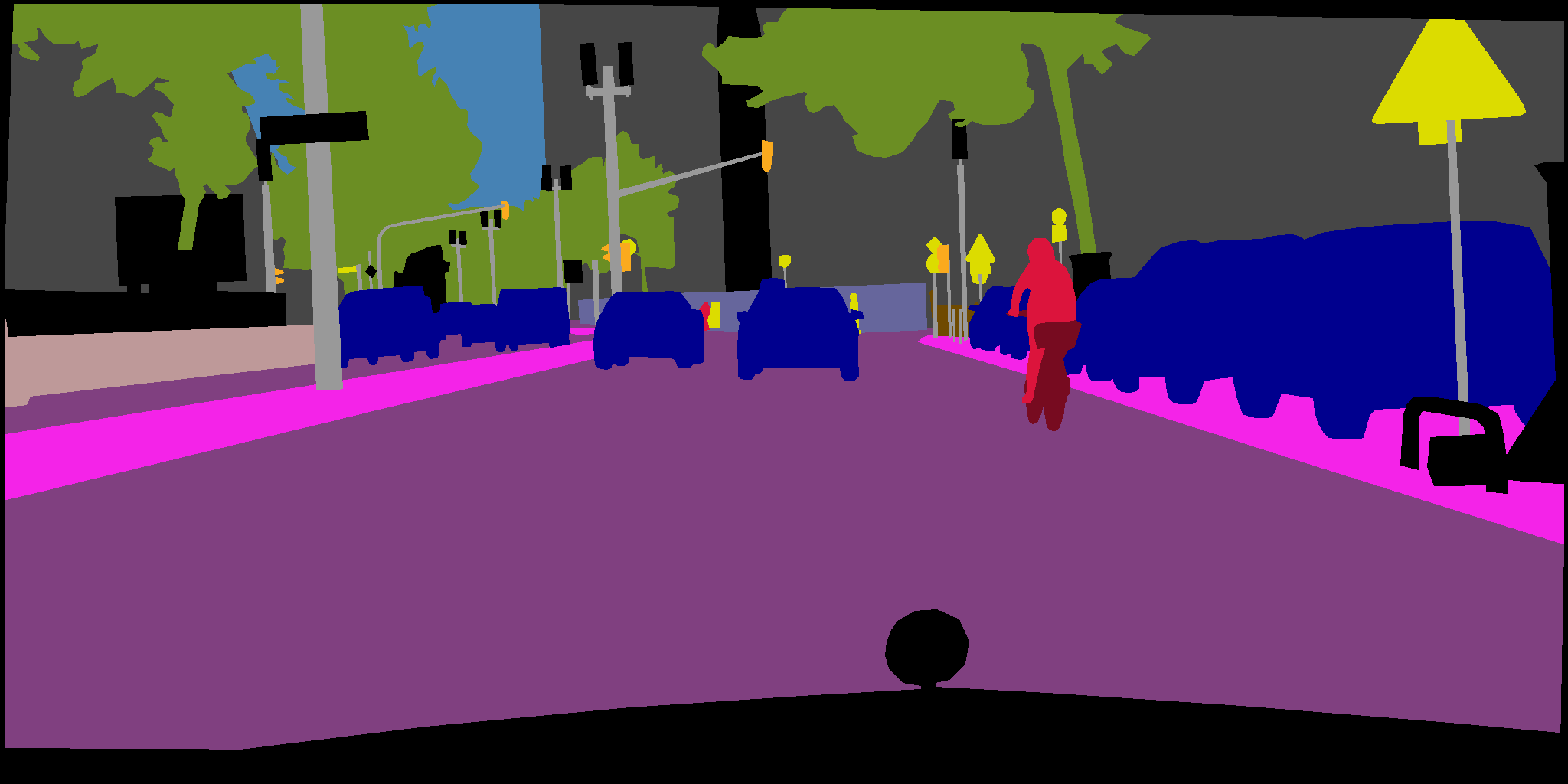}};
    \node at (20,-5.1) {\includegraphics[width=0.15\textwidth]{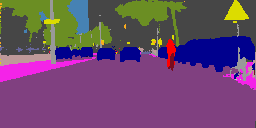}};
    \node at (30,-5.1) {\includegraphics[width=0.15\textwidth]{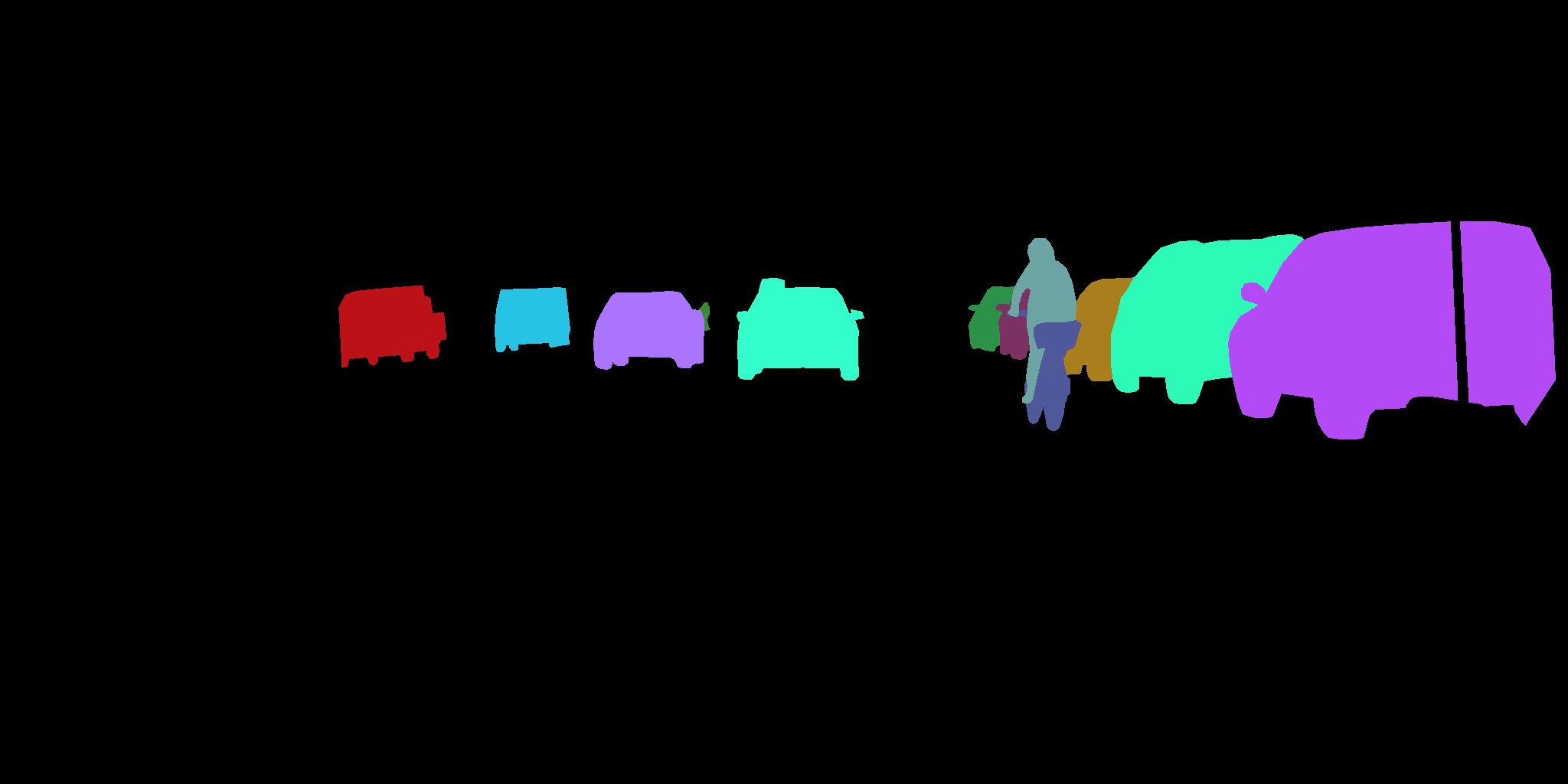}};
    \node at (40,-5.1) {\includegraphics[width=0.15\textwidth]{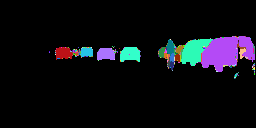}};
    \node at (50,-5.1) {\includegraphics[width=0.15\textwidth]{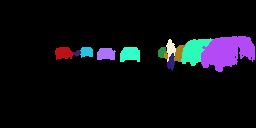}};

    \node at ( 0,-10.2){\includegraphics[width=0.15\textwidth]{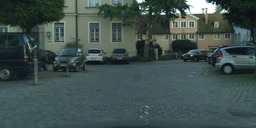}};
    \node at (10,-10.2){\includegraphics[width=0.15\textwidth]{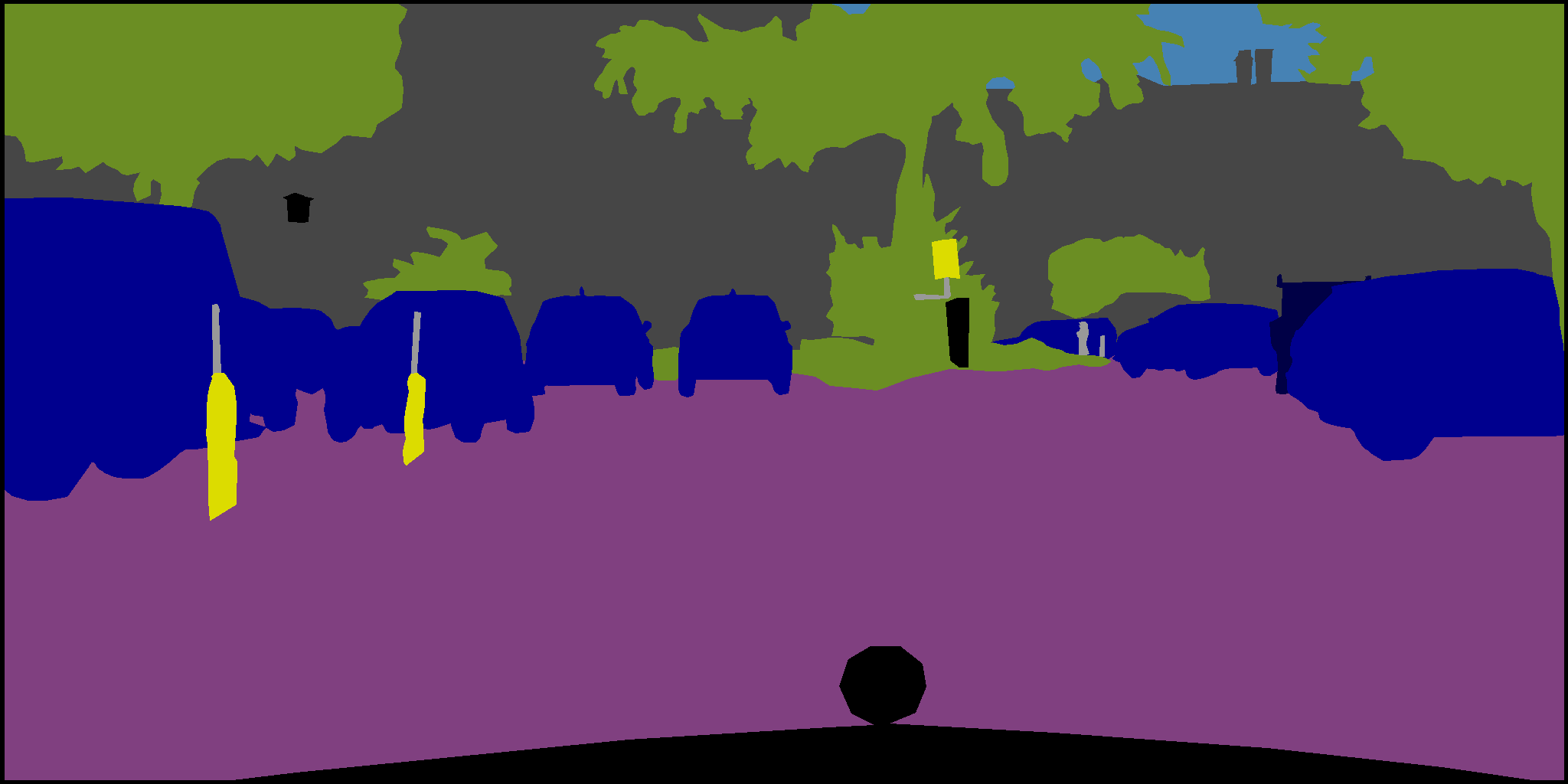}};
    \node at (20,-10.2){\includegraphics[width=0.15\textwidth]{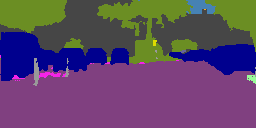}};
    \node at (30,-10.2){\includegraphics[width=0.15\textwidth]{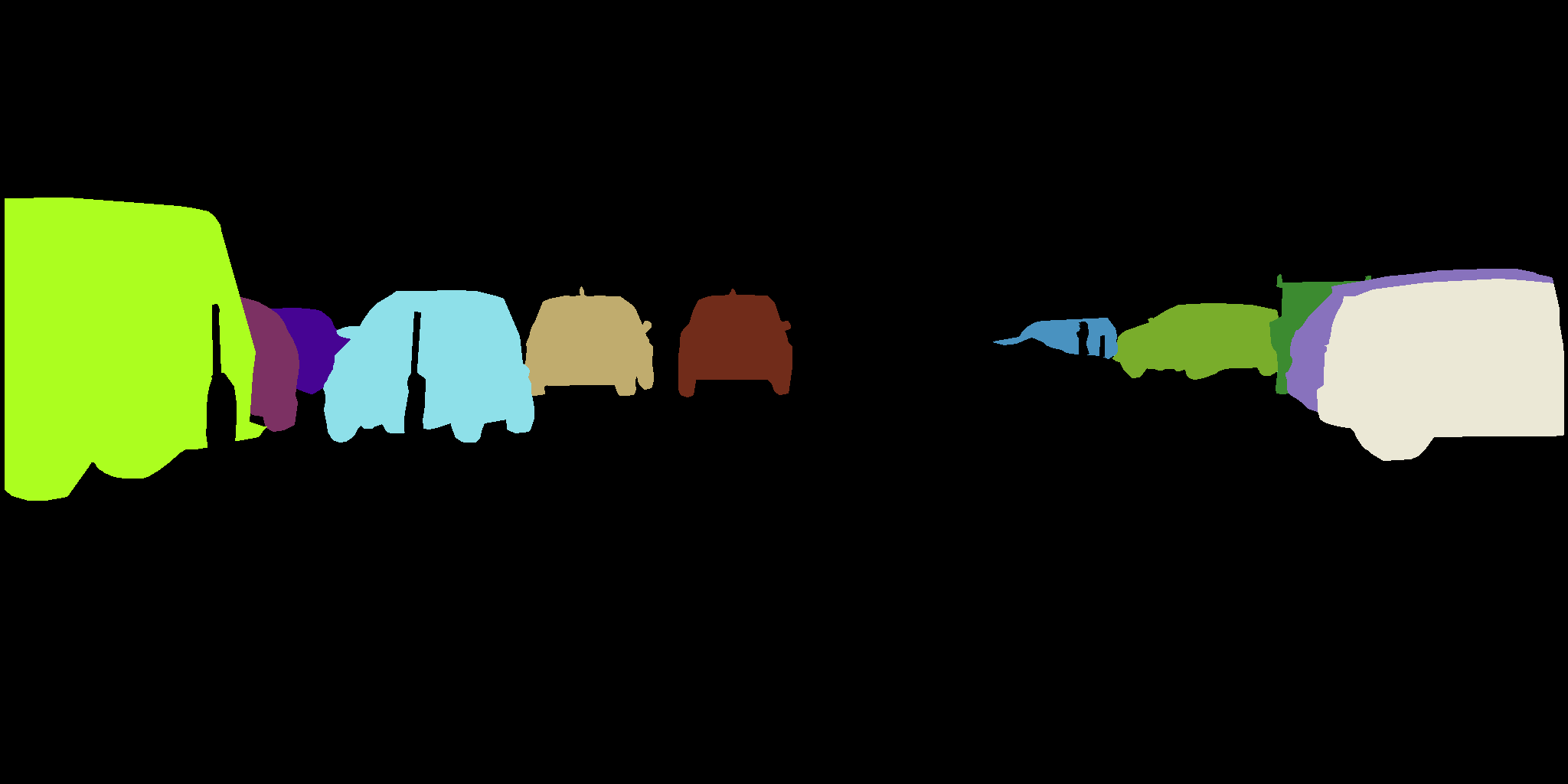}};
    \node at (40,-10.2){\includegraphics[width=0.15\textwidth]{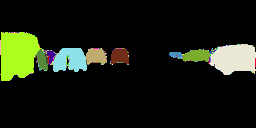}};
    \node at (50,-10.2){\includegraphics[width=0.15\textwidth]{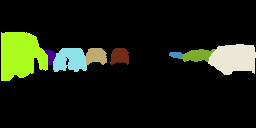}};

    \node at ( 0,-15.3){\includegraphics[width=0.15\textwidth]{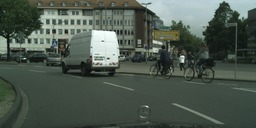}};
    \node at (10,-15.3){\includegraphics[width=0.15\textwidth]{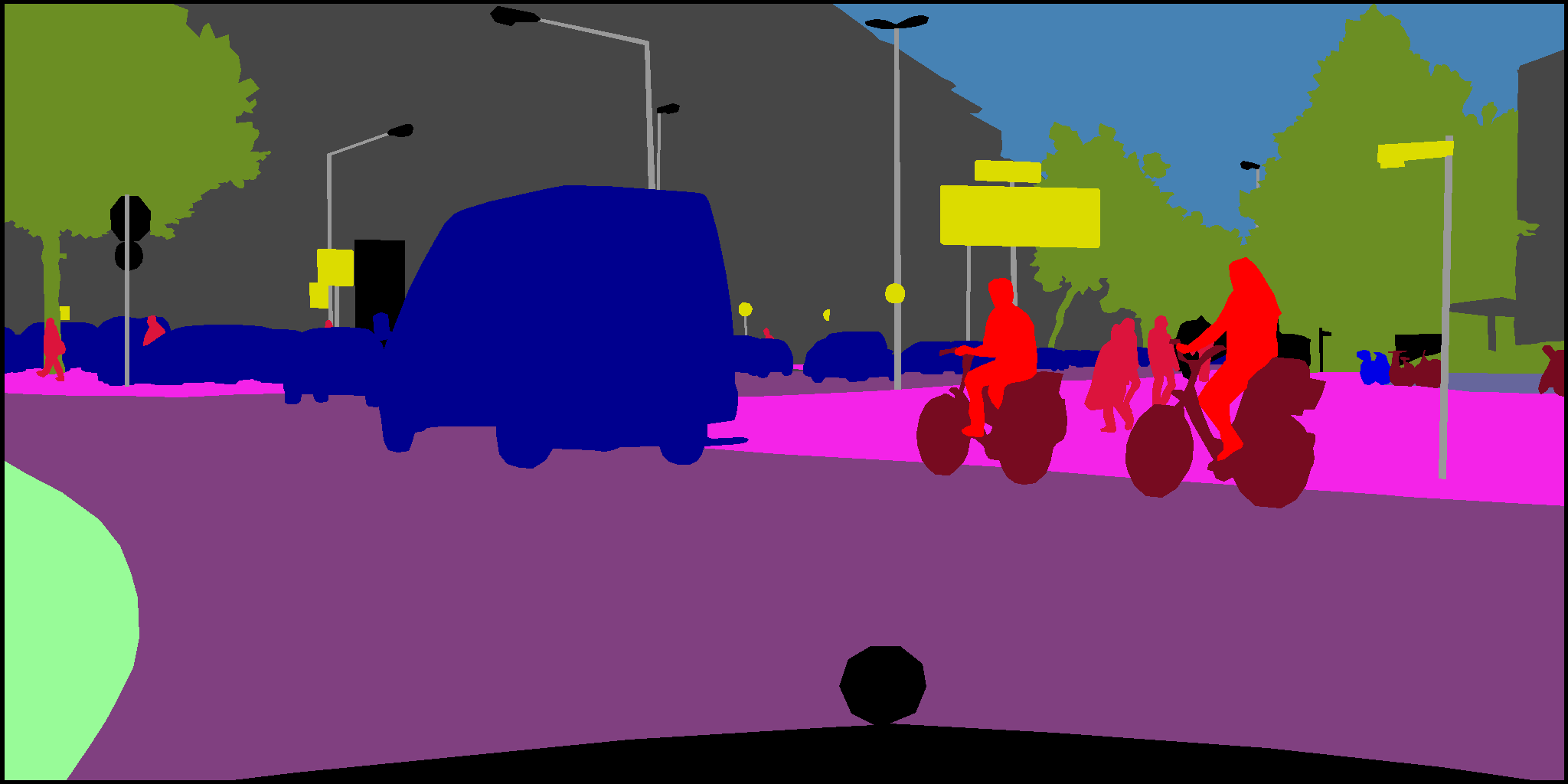}};
    \node at (20,-15.3){\includegraphics[width=0.15\textwidth]{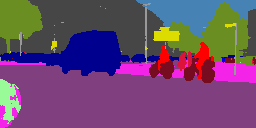}};
    \node at (30,-15.3){\includegraphics[width=0.15\textwidth]{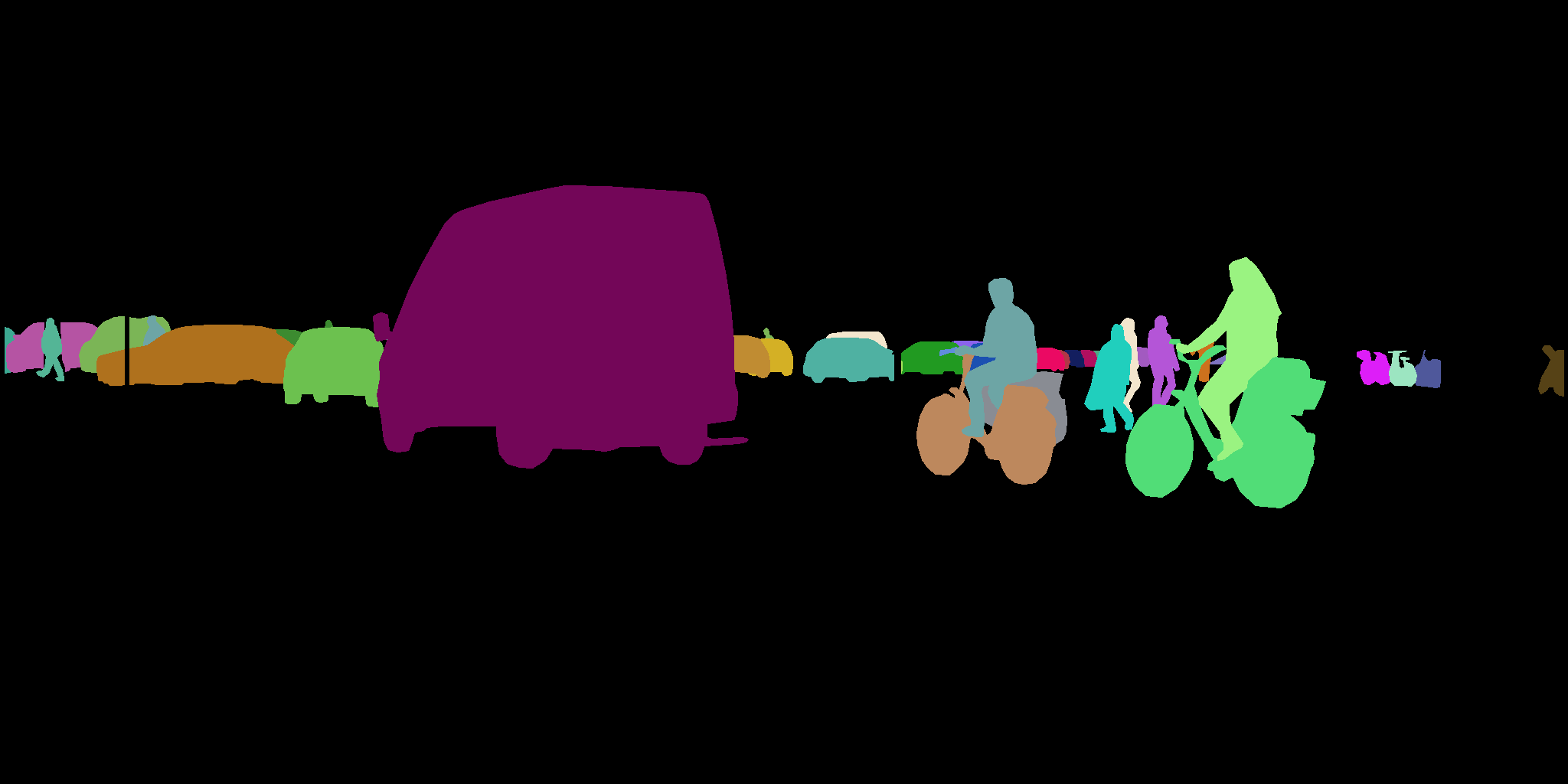}};
    \node at (40,-15.3){\includegraphics[width=0.15\textwidth]{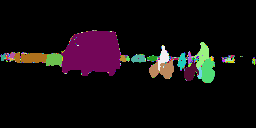}};
    \node at (50,-15.3){\includegraphics[width=0.15\textwidth]{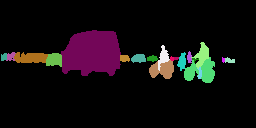}};

    \node at ( 0,-20.4){\includegraphics[width=0.15\textwidth]{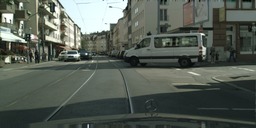}};
    \node at (10,-20.4){\includegraphics[width=0.15\textwidth]{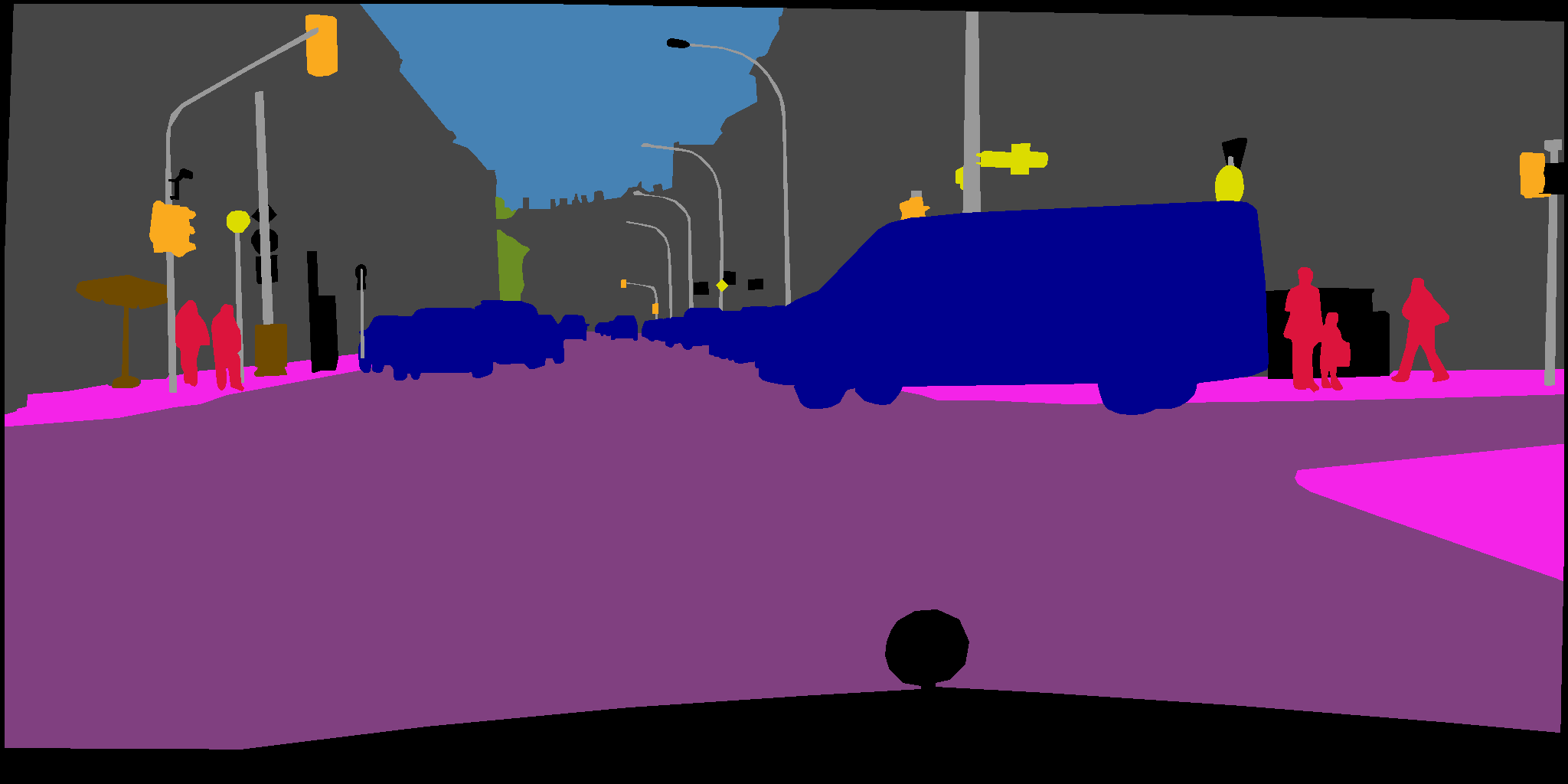}};
    \node at (20,-20.4){\includegraphics[width=0.15\textwidth]{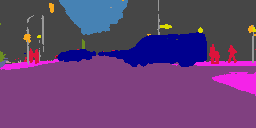}};
    \node at (30,-20.4){\includegraphics[width=0.15\textwidth]{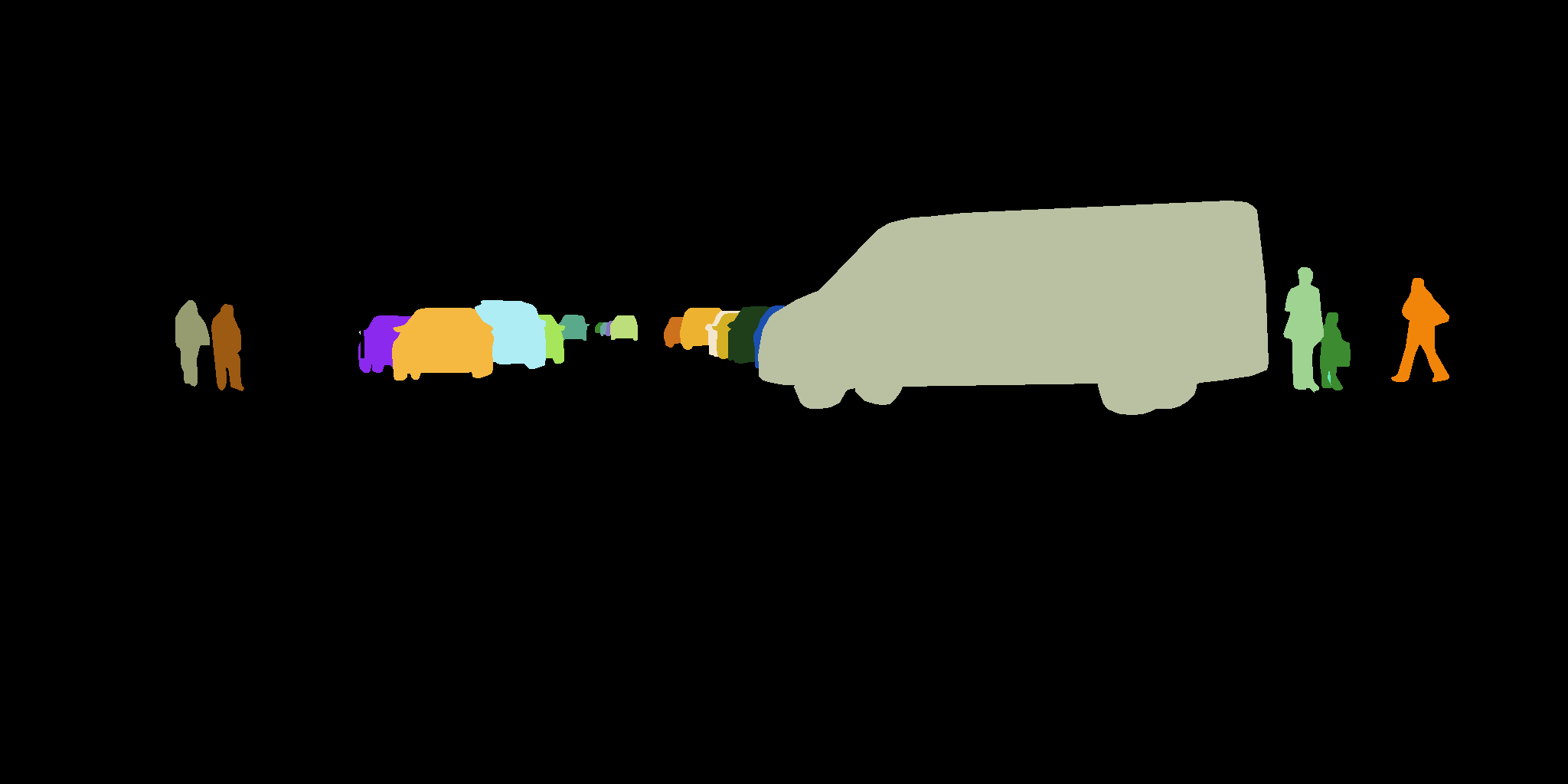}};
    \node at (40,-20.4){\includegraphics[width=0.15\textwidth]{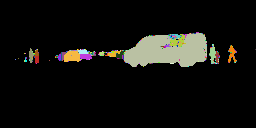}};
    \node at (50,-20.4){\includegraphics[width=0.15\textwidth]{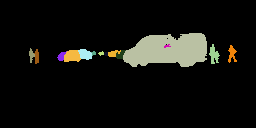}};

    \node at ( 0,-25.5){\includegraphics[width=0.15\textwidth]{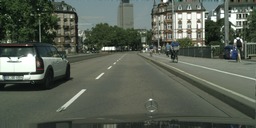}};
    \node at (10,-25.5){\includegraphics[width=0.15\textwidth]{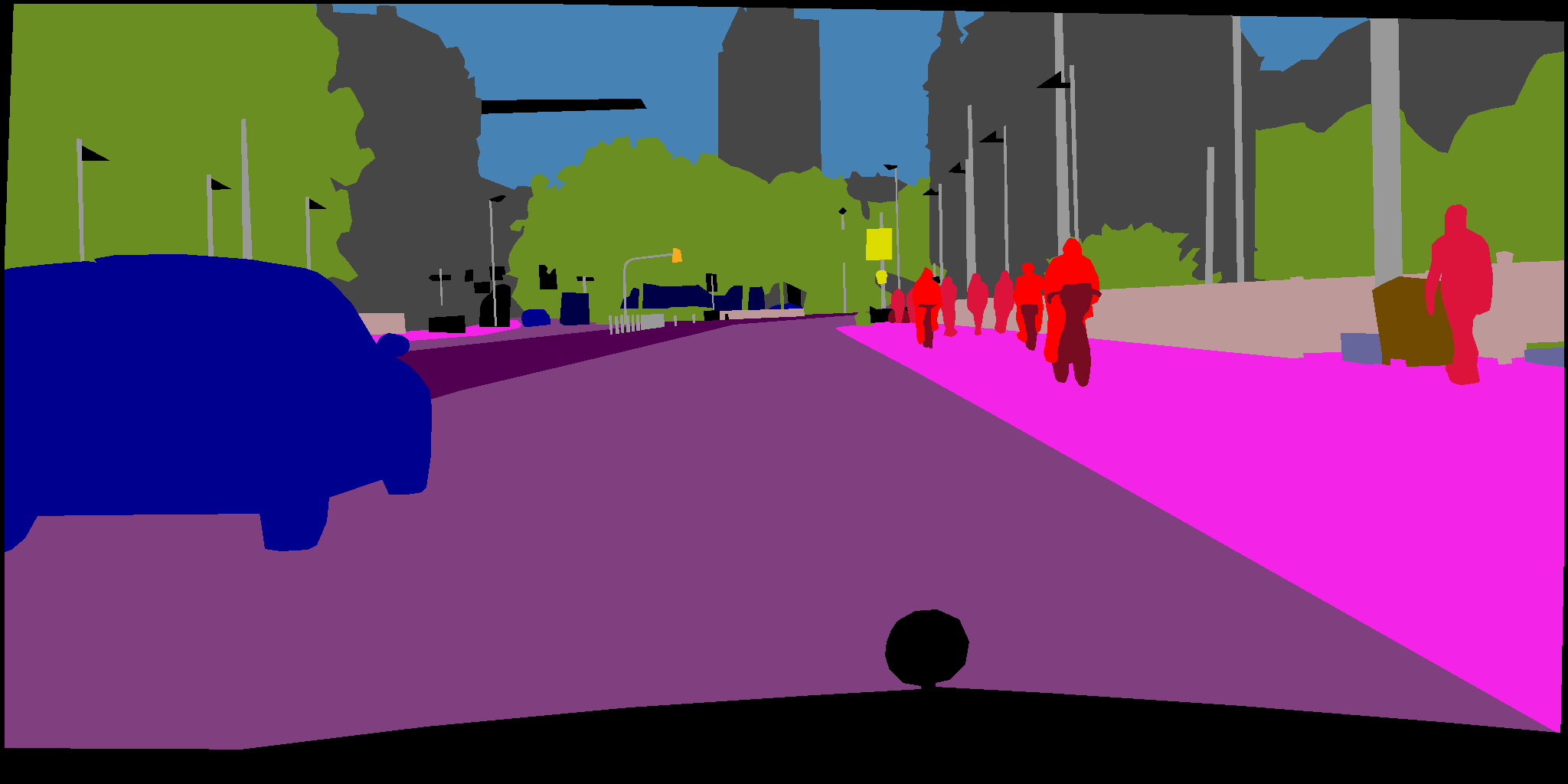}};
    \node at (20,-25.5){\includegraphics[width=0.15\textwidth]{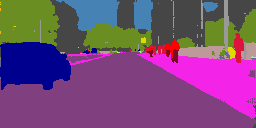}};
    \node at (30,-25.5){\includegraphics[width=0.15\textwidth]{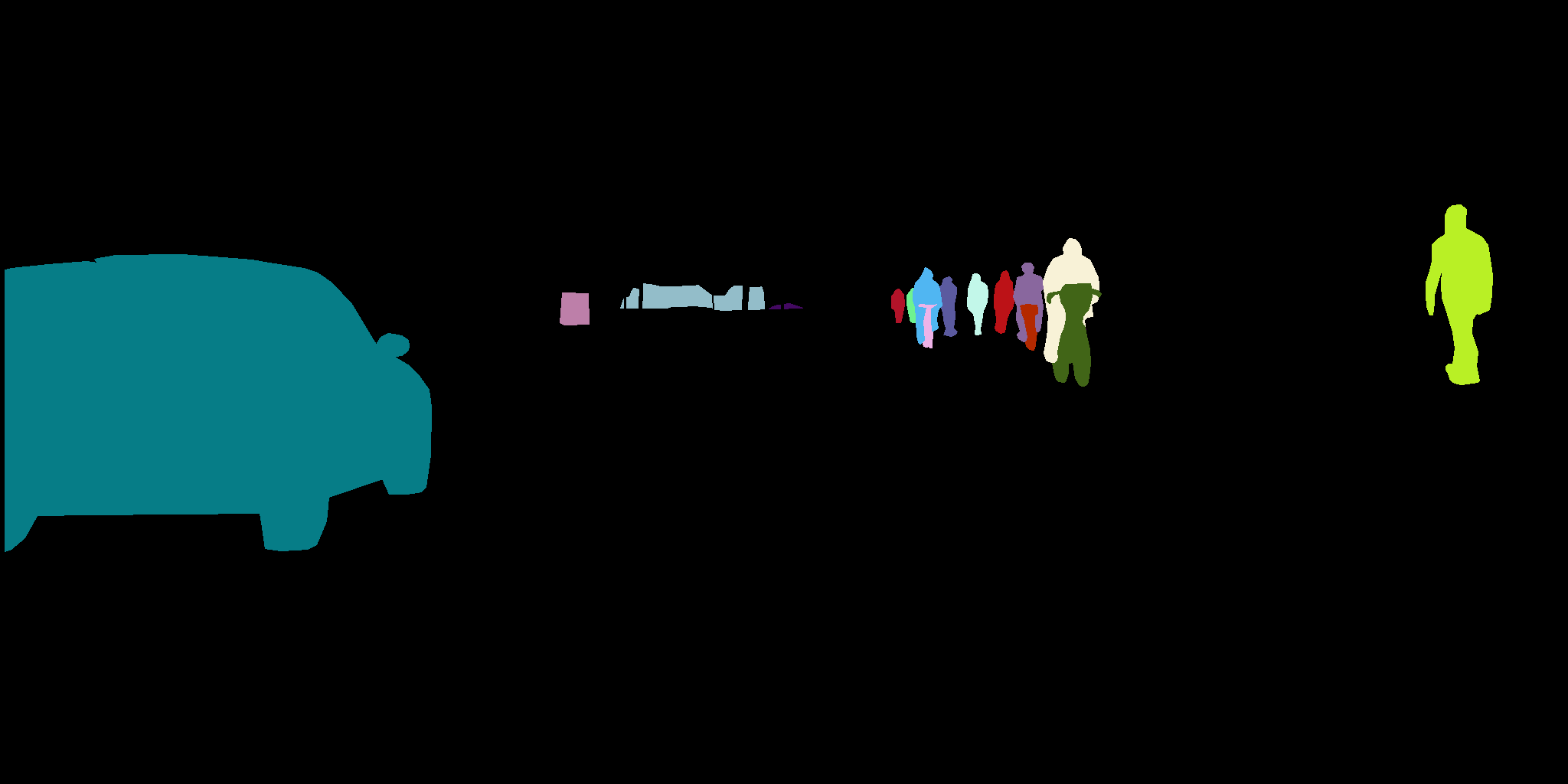}};
    \node at (40,-25.5){\includegraphics[width=0.15\textwidth]{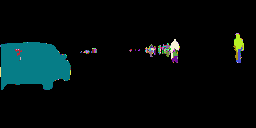}};
    \node at (50,-25.5){\includegraphics[width=0.15\textwidth]{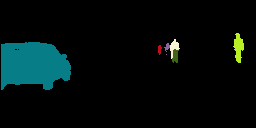}};

    \node at ( 0,-30.6){\includegraphics[width=0.15\textwidth]{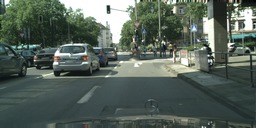}};
    \node at (10,-30.6){\includegraphics[width=0.15\textwidth]{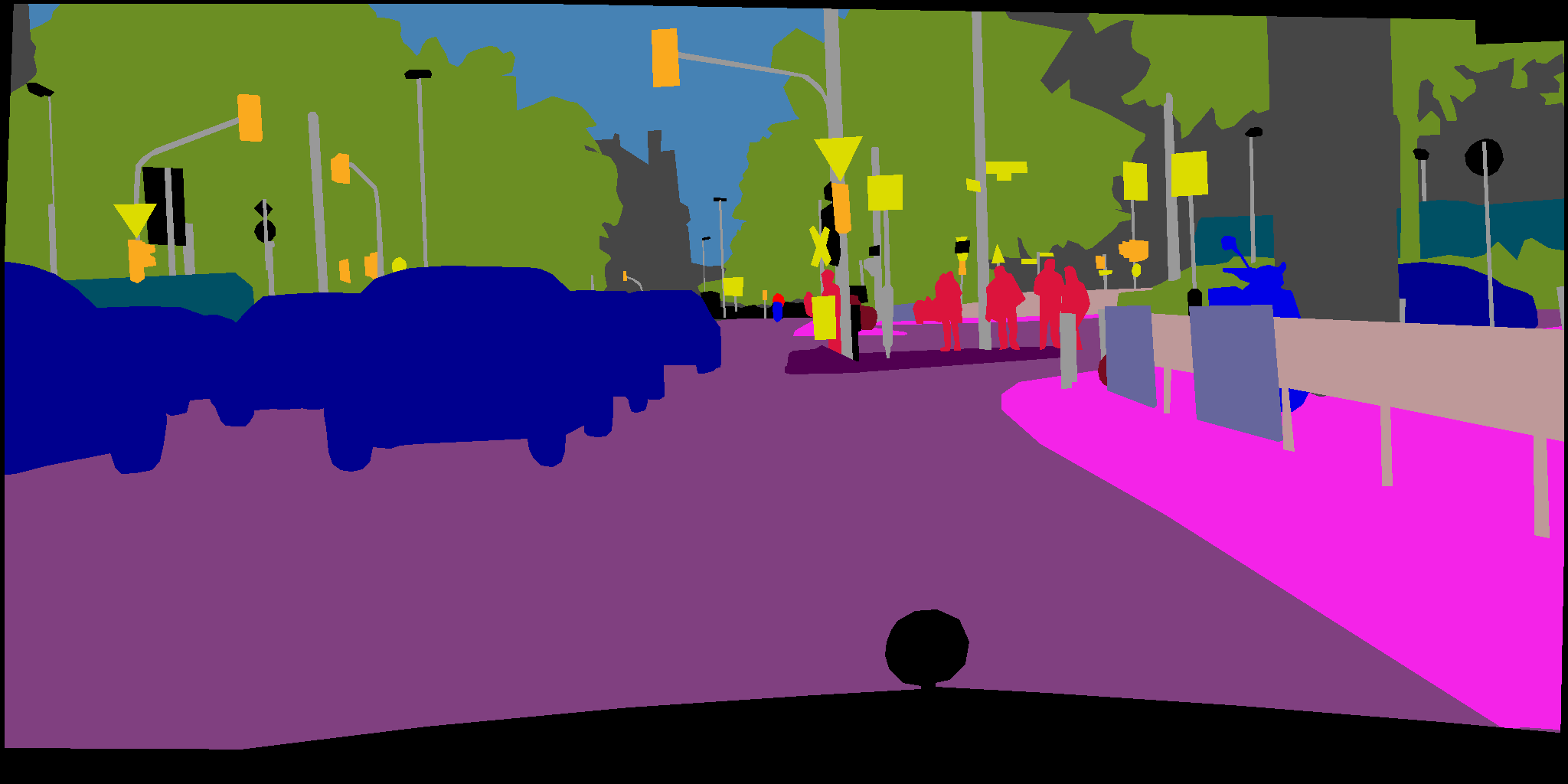}};
    \node at (20,-30.6){\includegraphics[width=0.15\textwidth]{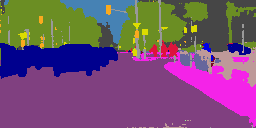}};
    \node at (30,-30.6){\includegraphics[width=0.15\textwidth]{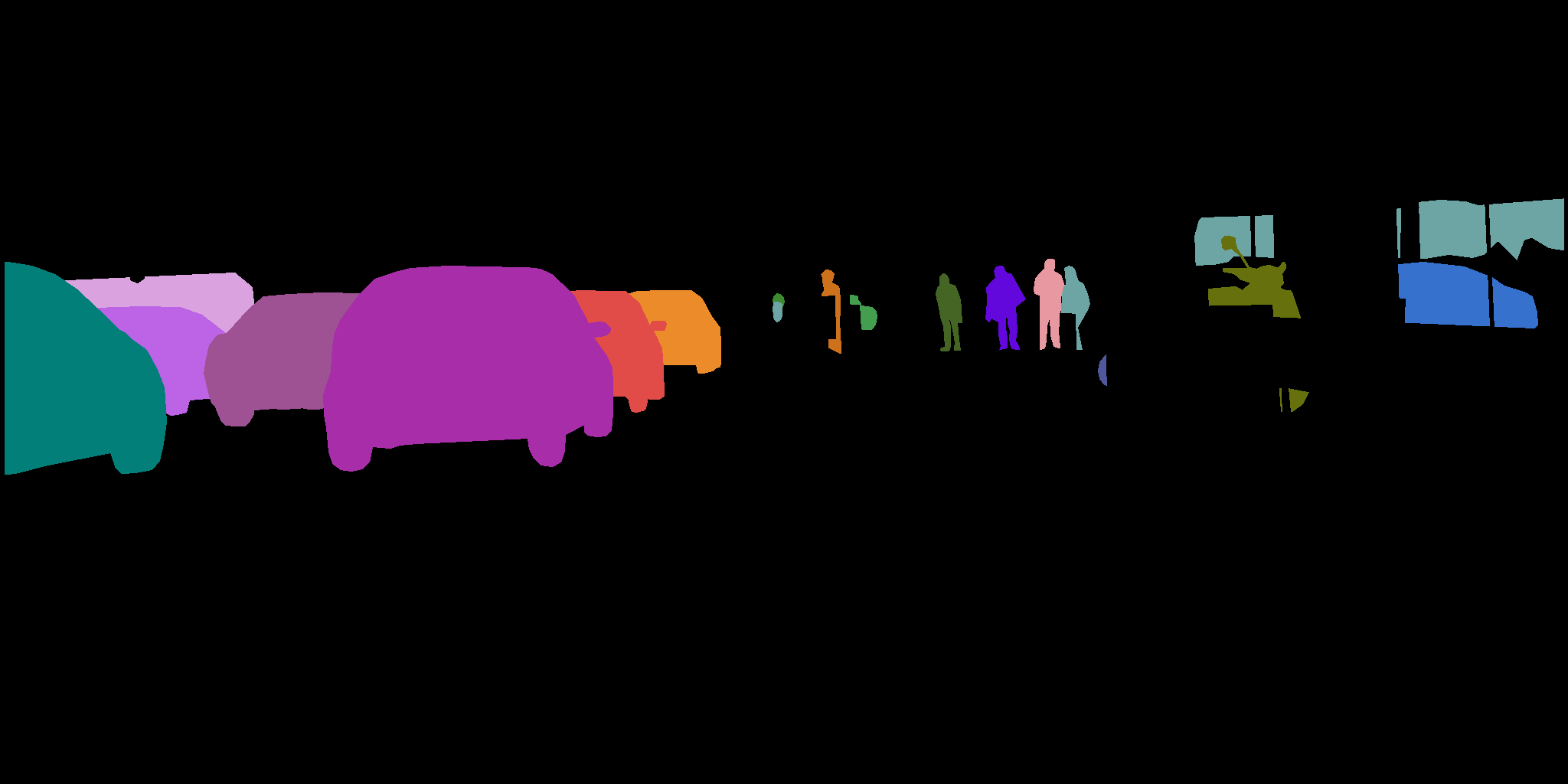}};
    \node at (40,-30.6){\includegraphics[width=0.15\textwidth]{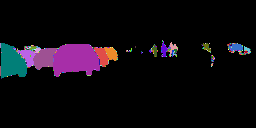}};
    \node at (50,-30.6){\includegraphics[width=0.15\textwidth]{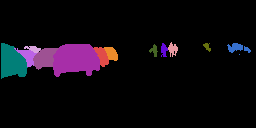}};

    \node at ( 0,-35.7){\includegraphics[width=0.15\textwidth]{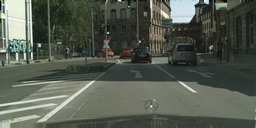}};
    \node at (10,-35.7){\includegraphics[width=0.15\textwidth]{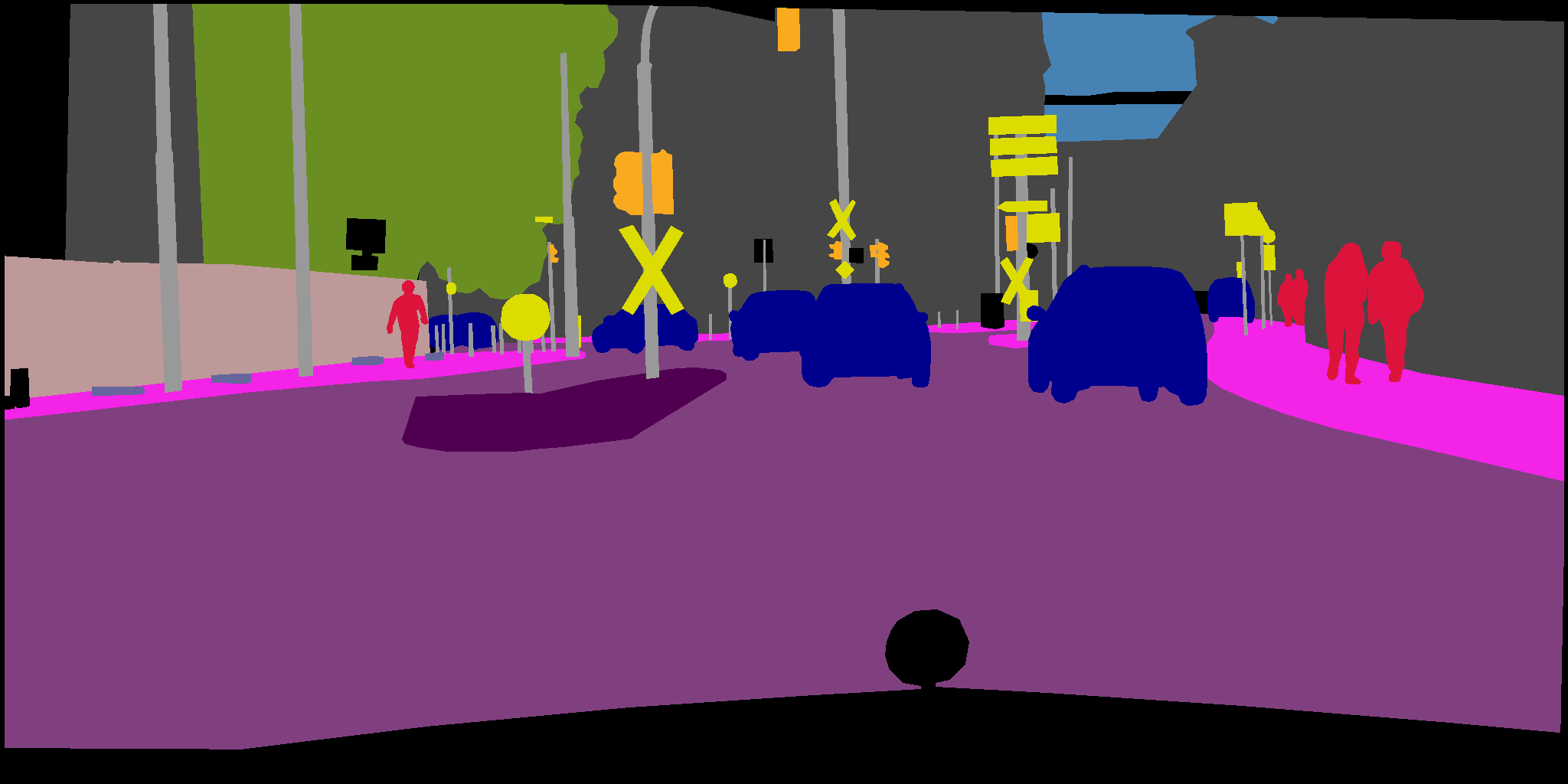}};
    \node at (20,-35.7){\includegraphics[width=0.15\textwidth]{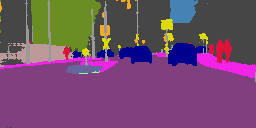}};
    \node at (30,-35.7){\includegraphics[width=0.15\textwidth]{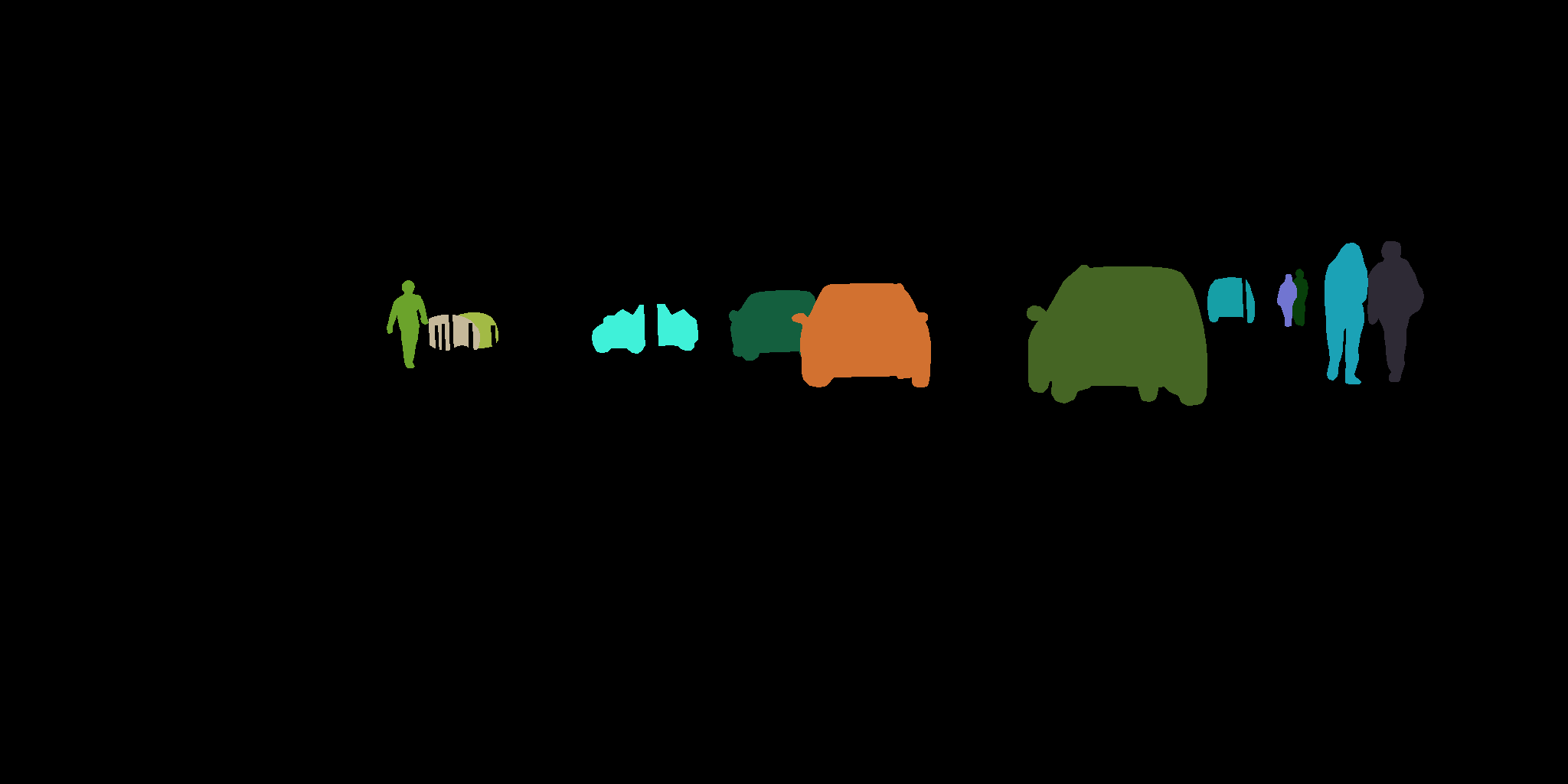}};
    \node at (40,-35.7){\includegraphics[width=0.15\textwidth]{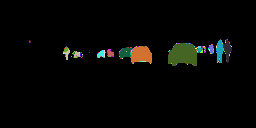}};
    \node at (50,-35.7){\includegraphics[width=0.15\textwidth]{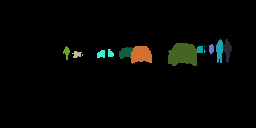}};
  \end{tikzpicture}
\end{center}
\caption{
Visualization of our predictions on the Cityscapes validation dataset~\cite{cityscapes16cvpr}, where we can compare with corresponding
ground truth (GT) and show respective RGB images.
}
\label{fig:csQualitativeResults}
\end{figure*}

\subsection{Outlook}

\begin{table}
\setlength{\tabcolsep}{4pt}
\renewcommand{\arraystretch}{1}
\centering
\small
\begin{tabular}{l@{\hskip 12pt}ccc}
\toprule
Algorithm                         & Dataset     & $\mapr$       & $\mapr^{50\%}$    \\
\midrule
Pixel Encoding \cite{uhrig16gcpr} & CS val      & $9.9$         & $22.5$            \\
Ours (\cite{uhrig16gcpr} scores)  & CS val      & $9.4$         & $22.1$            \\
Ours \alga{} (ResNet)             & CS val      & $11.3$        & \bst{26.8}        \\
Ours \algb{} (ResNet)             & CS val      & \bst{11.4}    & $26.1$            \\
\midrule
MCG+R-CNN \cite{cityscapes16cvpr} & CS test     & $4.6$         & $12.9$            \\
Pixel Encoding \cite{uhrig16gcpr} & CS test     & $8.9$         & $21.1$            \\
Ours \algb{} (ResNet)             & CS test     & \bst{9.8}     & \bst{23.2}        \\
\bottomrule
\end{tabular}
\caption{
Comparison of algorithms for instance segmentation on the Cityscapes (CS) dataset~\cite{cityscapes16cvpr}
using the mean average precision metrics introduced in \cite{cityscapes16cvpr}.
}
\label{tab:instance_results_CS_CS}
\end{table}

The reason for the varying performance for objects of different semantic classes certainly comes from their very different
typical forms, which we do not incorporate in our general approach. Uhrig et al. \cite{uhrig16gcpr} use different aspect
ratios for their sliding object templates to cope for these changes. In future work, we would like to combine multiple graphs
for different semantic classes to boost individual class performance. Also, the predicted FCN representation and scores
will be adjusted for better suiting the requirements of our graph optimization.

\section{Multiple Object Tracking}
\label{app:mot}

\subsection{Problem Statement}

Tang et al.~\cite{tang-2015} introduce a binary linear program w.r.t.~a graph $G = (V, E)$ whose nodes are candidate detections of humans visible in an image.
Every feasible solution is a pair $(x,y)$ with $x \in \{0,1\}^V$ and $y \in \{0,1\}^E$, constrained such that
\begin{align}
\forall \{v,w\} \in E: \quad 
    & y_{vw} \leq x_v \label{eq:supp_1} \\
    & y_{vw} \leq x_w \label{eq:supp_2} \\
\forall C \in \textnormal{cycles}(G)\ \forall e \in C: \quad
    & 1 - y_e \leq \hspace{-1.5ex} \sum_{f \in C \setminus \{e\}} \hspace{-1.5ex} (1 - y_f)
\end{align}
The objective function has the form below with coefficients $\alpha$ and $\beta$.
\begin{align}
\sum_{v \in V} \alpha_v x_v + \sum_{e \in E} \beta_e y_e 
\end{align}

We identify the solutions of this problem with the solutions of the \nllmp{} w.r.t.~the graphs $G' = G$, the label set $L = \{\epsilon, 1\}$
and the costs $c^{\not\sim} = 0$ and
\begin{align}
c_{vl} & := \begin{cases}
    \alpha_v & \textnormal{if}\ l = 1 \\
    0 & \textnormal{if}\ l = \epsilon \\
\end{cases} \\
c^\sim_{vw,ll'} & := \begin{cases}
    \beta_{vw} & \textnormal{if}\ l = 1 \wedge  l' = 1 \\
    0 & \textnormal{if}\ l = 1 \ \textnormal{xor}\ l' = 1 \\
    \infty & \textnormal{if}\ l = l' = \epsilon
\end{cases}
\enspace .
\end{align}
Note that in \cite{tang-2015}, $y_{dd'} = 1$ indicates a join.
In our \nllmp{}, $y_{dd'} = 1$ indicates a cut.

\subsection{Further Results}

A complete evaluation of our experimental results in terms of the Multiple Object Tracking Challenge 2016 can be found at \url{http://motchallenge.net/tracker/NLLMPa}.

\end{document}